\documentclass[12pt,
a4paper,twoside
]{article}

\usepackage[utf8]{inputenc}
\usepackage[T1]{fontenc}
\usepackage[french,english]{babel}
 
\usepackage{cite}			

\usepackage{lmodern}
\usepackage{dsfont}			
\usepackage{stmaryrd}
\usepackage{bbm}
\usepackage{mathrsfs}
\usepackage{upgreek}

\usepackage[a4paper
,twoside
,hmarginratio=3:4
,vcentering
,headheight=15pt
,textheight=0.7\paperheight,textwidth=0.7\paperwidth
]{geometry}
\usepackage{layout}

\usepackage{relsize}

\usepackage{fancyhdr}
\usepackage{titling}

\usepackage{pstricks}
\usepackage{pifont}
\usepackage{enumerate}
\usepackage{graphicx}
\usepackage[all]{xy}
\usepackage{multicol}
\usepackage{longtable}
\usepackage{amsmath,amsfonts,amssymb,amstext}
\usepackage[amsmath,thmmarks,hyperref,amsthm]{ntheorem}
\usepackage{nicefrac}

\usepackage{etoolbox}
\usepackage{fancyvrb}
\usepackage{marginnote}
\usepackage{cancel}
\usepackage{etex}

\usepackage{tikz}
\usetikzlibrary{trees,arrows}

\usepackage{varioref}
\usepackage[unicode,bookmarks,colorlinks=true,citecolor=black,linkcolor=black
]{hyperref}
\usepackage[all]{hypcap}
\usepackage{cleveref}

\DeclareMathOperator{\supp}{supp}

\newcommand{\ds}{\displaystyle}

\renewcommand{\C}[1]{\mathscr{#1}}
\newcommand{\F}[1]{\mathfrak{#1}}
\newcommand{\B}[1]{\mathds{#1}}
\newcommand{\Up}[1]{\mathrm{#1}}

\newcommand{\wt}[1]{\widetilde{#1}}

\renewcommand{\leq}{\leqslant}

\newcommand{\quotient}[2]{\nicefrac{\mathlarger{#1}}{\mathlarger{#2}}}








\numberwithin{equation}{section}
\pretocmd{\chapter}{
}
{}{}

\theoremstyle{break}
\newtheorem{thm}{Theorem}[section]
\newtheorem{lemma}[thm]{Lemma}

\newtheorem{prop}[thm]{Proposition}

\newtheorem{dfn}{Definition}[section]

\theoremheaderfont{\itshape}

\crefname{dfn}{definition}{definitions}
\Crefname{dfn}{Definition}{Definitions}
\crefname{prop}{proposition}{propositions}
\Crefname{prop}{Proposition}{Propositions}
\crefname{cor}{corollary}{corollaries}
\Crefname{cor}{Corollary}{Corollaries}

\makeatletter
\renewenvironment{proof}[1][\proofname]{
  \th@nonumberplain
  \def\theorem@headerfont{\scshape}%
  \normalfont
  \theoremsymbol{\ensuremath{_\blacksquare}}
  \@thm{proof}{proof}{#1}}%
  {\@endtheorem}
\makeatother

\pagestyle{fancy}
\fancyhf{}
\fancyhead[RE]{\small\sc\nouppercase{\leftmark}}
\fancyhead[LO]{\small\sc\nouppercase{\rightmark}}
\fancyhead[LE,RO]{\thepage}

\title{Markov substitute processes : a new model for linguistics and beyond}
\author{Olivier \textsc{Catoni} and Thomas \textsc{Mainguy}}
\date{\today}

\begin{document}

\pagestyle{fancy}
\fancyhf{}
\fancyhead[RE]{\small\sc\nouppercase{\leftmark}}
\fancyhead[LO]{\small\sc\nouppercase{\rightmark}}
\fancyhead[LE,RO]{\thepage}


\maketitle

\begin{abstract}
We introduce Markov substitute processes, a new model at 
the crossroad of statistics and formal grammars, 
and prove its main property : Markov substitute 
processes with a given support form an exponential 
family.\\ 
{\bf Keywords} : 
Markov processes, Natural language processing,
Statistical models for finite state processes.\\
{\bf MSC 2010 classification} 
62M09, 60J10, 91F20, 68T50, 
\end{abstract}

\section{Introduction}

We defined in a previous work \cite{Mainguy} Markov substitute models 
with linguistics 
in mind \cite{Stabler09,Roark01}. Our purpose was to propose families 
of probability measures 
on sentences (finite sequences of words 
taken from a given finite dictionary), 
and use them to learn the syntax of a language 
from an i.i.d. random sample of sentences
by performing some suitable kind of statistical 
estimation. 
However, the model we built with this idea in mind 
turned out to be a general purpose extension of 
Markov chains that can be applied to other types of data. 

\section{Definition of Markov substitute models} 

According to our definition, a Markov substitute model 
is a family of probability measures on finite sequences 
of words taken from a finite dictionary $D$. Therefore, the 
state space of the model is $D^+ = \bigcup_{j=1}^{\infty} 
D^j$, the set of all sentences of finite non-zero length. 
Adding the empty string $\varepsilon$ of zero length, 
we will also consider $D^* = \{ \varepsilon \} \cup D^+$.  

In a Markov substitute model, some expressions (subsequences 
of words) can be substituted to others independently 
from the context. To give a precise definition of this 
property, it is useful to introduce the insertion 
operator $\alpha$. It operates on a two sided context 
$x = (x_1, x_2) \in D^* \times D^*$, (where the left and 
right contexts $x_1$ and $x_2$ may be empty strings), 
and an expression $y \in D^+$ ( that is a non-empty 
finite string of words). The insertion operator is defined 
as   
\[ 
\alpha(x, y) = \gamma(x_1, y, x_2), \qquad x \in D^* \times D^*, y \in D^+, 
\] 
where $\gamma$ is the concatenation operator that simply 
pieces strings together. Remark that given $x \in (D^*)^2$, 
the map 
\begin{align*}
D^+ &  \rightarrow D^+ \\ 
y & \mapsto \alpha(x, y) 
\end{align*}
is one to one, whereas for a given $y \in D^+$, 
the map 
\begin{align*}
(D^*)^2 & \rightarrow D^+ \\ 
x & \mapsto \alpha(x, y) 
\end{align*} 
is not, since, for instance, $
\gamma(y,y,y') = \alpha \bigl( 
(y, y'), y \bigr) = \alpha \bigl[ 
\bigl(\varepsilon, \gamma(y,y') \bigr), 
y \bigr]$. 

We define Markov substitute models in terms of Markov 
substitute sets. 
\begin{dfn}
Consider a domain $\C{D} \subset D^+$ (that may be $D^+$ itself
or any subset of it). 
Consider a probability distribution $\Up{P} 
\in \C{M}_+^1(D^+)$ on nonempty strings of words, 
such that $\supp( \Up{P}) \subset \C{D}$.   
Introducing the 
domain $\C{D}$ allows for instance to impose if 
needed that $\Up{P}$ has a finite support.
The support of $\Up{P}$ defines a language
(a subset of non empty strings of words), 
and $\Up{P}$ itself indicates the probability 
to observe any given sentence in this language. 
A set $B \subset D^+$ of syntagms 
is called a Markov substitute set 
of the string distribution $\Up{P}$ 
if and only if 
there exists a skew-symmetric function 
$\beta : B \times B \rightarrow \B{R}$, 
called a substitute exponent, 
such that for any context 
$x \in (D^*)^2$ and any $y, y' \in B$, such that $\{ 
\alpha(x, y), \alpha(x, y') \} \subset \C{D}$, 
\begin{equation}
\label{eq:swapsub}
\Up{P} \bigl( \alpha(x,y') \bigr) = 
\Up{P} \bigl( \alpha(x,y) \bigr) \exp \bigl( \beta (y,y') \bigr). 
\end{equation}
\end{dfn}
(By skew symmetric, we mean that $\beta (y,y') = - \beta (y', y)$.) 
We chose the notation $\beta$ instead of $\beta_{B}$, because in 
the case when $\{y,y'\} \subset B \cap B'$, the intersection of two Markov substitute sets, 
the substitute exponents involved in the property that $B$ is a Markov 
substitute set and in the property that $B'$ is a Markov substitute 
set are the same when the pair $\{y,y'\}$ is active, meaning that 
there is $x \in (D^*)^2$, such that $\Up{P} \bigl( \alpha(x, y) \bigr) > 0$
and $\Up{P} \bigl( \alpha(x, y') \bigr) > 0$ and are arbitrary and therefore 
can also be chosen to be the same when the pair $\{y,y'\}$ is not 
active. 
\begin{prop}
For any Markov substitute set $B$ 
of $\Up{P}$ on the domain $\C{D}$, for any $x_1, x_2 \in (D^*)^2$ and any $y_1, y_2 \in B$ 
such that $\{ \alpha(x_i, y_j), 1 \leq i \leq 2, 1 \leq j \leq 2 \} 
\subset \C{D}$, 
\begin{equation}
\label{eq:swap}
\Up{P} \bigl( \alpha(x_1, y_1) \bigr) 
\Up{P} \bigl( \alpha(x_2, y_2) \bigr) 
= \Up{P} \bigl( \alpha(x_1, y_2) \bigr) 
\Up{P} \bigl( \alpha(x_2, y_1) \bigr).
\end{equation}
\end{prop}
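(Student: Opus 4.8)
The plan is to obtain \eqref{eq:swap} directly from the defining identity \eqref{eq:swapsub} by applying it twice, exploiting the two features of the definition: that the substitute exponent $\beta$ does not depend on the context, and that it is skew-symmetric. First I would apply \eqref{eq:swapsub} to the context $x_1$ and the pair $(y_1,y_2)\in B\times B$; since by hypothesis $\{\alpha(x_1,y_1),\alpha(x_1,y_2)\}\subset\C{D}$, this is legitimate and gives
\[
\Up{P}\bigl(\alpha(x_1,y_2)\bigr)=\Up{P}\bigl(\alpha(x_1,y_1)\bigr)\exp\bigl(\beta(y_1,y_2)\bigr).
\]

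Next I would apply \eqref{eq:swapsub} a second time, now to the context $x_2$ and to the pair $(y_2,y_1)$, which is again admissible because $\{\alpha(x_2,y_1),\alpha(x_2,y_2)\}\subset\C{D}$, to obtain
\[
\Up{P}\bigl(\alpha(x_2,y_1)\bigr)=\Up{P}\bigl(\alpha(x_2,y_2)\bigr)\exp\bigl(\beta(y_2,y_1)\bigr).
\]
Multiplying these two identities side by side and using $\beta(y_2,y_1)=-\beta(y_1,y_2)$, the two exponential factors cancel, leaving
\[
\Up{P}\bigl(\alpha(x_1,y_2)\bigr)\Up{P}\bigl(\alpha(x_2,y_1)\bigr)=\Up{P}\bigl(\alpha(x_1,y_1)\bigr)\Up{P}\bigl(\alpha(x_2,y_2)\bigr),
\]
which is precisely \eqref{eq:swap}.

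I do not expect a genuine obstacle here; the argument is a two-step bookkeeping computation. The one point worth stating explicitly is \emph{why the same function} $\beta$ governs both applications: this is exactly what the definition asserts, namely that a single skew-symmetric $\beta:B\times B\to\B{R}$ works for every context (as stressed in the remark following the definition). Were $\beta$ permitted to depend on $x$, the cancellation would break down. I would also note that the argument performs no division by a probability, so it remains valid when some of the four quantities $\Up{P}(\alpha(x_i,y_j))$ vanish: in that degenerate situation \eqref{eq:swapsub} forces the corresponding paired probability to vanish as well, and both sides of \eqref{eq:swap} are then zero.
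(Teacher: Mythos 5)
Your proposal is correct and follows essentially the same route as the paper's own proof: apply the defining identity \eqref{eq:swapsub} once with context $x_1$ and once with context $x_2$, then cancel the two exponential factors via the skew-symmetry $\beta(y_2,y_1)=-\beta(y_1,y_2)$. The only difference is cosmetic (which side of the identity you solve for), and your closing remarks about context-independence of $\beta$ and the absence of any division are accurate.
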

This proposition shows that we can exchange $y_1$ and $y_2$ in 
a pair of independent draws from $\Up{P}$ without changing 
the likelihood of this pair. This generalizes of course to 
larger i.i.d. samples drawn from $\Up{P}$ : we do not change 
the likelihood of the sample if we exchange $y_1$ and $y_2$ 
belonging to the same Markov substitute set $B$.   
\begin{proof}
Using the definition, we see that 
\begin{multline*}
\Up{P} \bigl( \alpha(x_1,y_1) \bigr) \Up{P} \bigl( \alpha(x_2,y_2)
\bigr) \\ = \Up{P} \bigl( \alpha(x_1, y_2) \bigr) \exp \bigl( 
\beta (y_2,y_1) \bigr)
\Up{P} \bigl( \alpha(x_2, y_1) \bigr) \exp \bigl( \beta (y_1, y_2) \bigr) \\ = 
\Up{P} \bigl( \alpha(x_1, y_2) \bigr) 
\Up{P} \bigl( \alpha(x_2, y_1) \bigr). 
\end{multline*}
\end{proof} 
Markov substitute sets have the following elementary properties.
\begin{prop}
\label{prop1.1}
\begin{itemize}
\item A subset of a Markov substitute set is itself a Markov substitute 
set.
\item A set $B \subset D^+$ 
is a Markov substitute set if and only 
if, for any $y, y' \in B$, the pair $\{y, y'\}$ is a Markov substitute 
set.  
\item If $B \subset D^+$ is a Markov substitute set 
and if $x \in (D^*)^2$ 
then 
\[
\alpha(x, B) \overset{\text{\rm def}}{=} \{ \alpha(x, y), y \in B \}
\] is also a Markov substitute set. 
\end{itemize}
\end{prop}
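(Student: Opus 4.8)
The plan is to handle the three assertions one at a time, in each case by writing down an explicit substitute exponent. For the first item, suppose $B' \subset B$ and let $\beta : B \times B \to \B{R}$ be a substitute exponent witnessing that $B$ is a Markov substitute set. I would simply take the restriction $\beta|_{B' \times B'}$: it is still skew-symmetric, and since every $y, y' \in B'$ also lie in $B$, equation~\eqref{eq:swapsub} holds for all $y, y' \in B'$ and every admissible context. Hence $B'$ is a Markov substitute set. This step is immediate.

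For the second item, the forward implication is the special case of the first in which $B' = \{y, y'\}$ for $y, y' \in B$. For the converse, assume that every pair $\{y, y'\} \subset B$ (allowing $y = y'$, in which case the exponent vanishes on the diagonal by skew symmetry) is a Markov substitute set. For each unordered pair I would fix one of its substitute exponents; as explained in the remark preceding the proposition, this choice is forced exactly on the active pairs and otherwise free, so the choices made on different pairs never conflict. Collecting these values defines a single skew-symmetric function $\beta : B \times B \to \B{R}$, and because \eqref{eq:swapsub} only ever constrains one pair $\{y, y'\}$ at a time, $\beta$ witnesses that $B$ itself is a Markov substitute set.

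For the third item, fix $x = (x_1, x_2) \in (D^*)^2$ and a substitute exponent $\beta$ for $B$. Since the map $y \mapsto \alpha(x, y)$ is one to one on $D^+$, every element of $\alpha(x, B)$ equals $\alpha(x, y)$ for a unique $y \in B$, so I can define $\beta'$ on $\alpha(x,B) \times \alpha(x,B)$ by $\beta'(\alpha(x,y), \alpha(x,y')) = \beta(y, y')$; this is well defined and skew-symmetric. To verify \eqref{eq:swapsub} for $\alpha(x, B)$, take any context $w = (w_1, w_2) \in (D^*)^2$ and any $y, y' \in B$. By associativity of concatenation, $\alpha(w, \alpha(x, y)) = \gamma(w_1, x_1, y, x_2, w_2) = \alpha(x', y)$ with $x' = (\gamma(w_1, x_1), \gamma(x_2, w_2)) \in (D^*)^2$, and similarly for $y'$. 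Thus inserting an element of $\alpha(x, B)$ into the context $w$ amounts to inserting the corresponding element of $B$ into the enlarged context $x'$; in particular the condition $\{\alpha(w, \alpha(x,y)), \alpha(w, \alpha(x,y'))\} \subset \C{D}$ is precisely $\{\alpha(x', y), \alpha(x', y')\} \subset \C{D}$, and applying \eqref{eq:swapsub} for $B$ at the context $x'$ yields exactly the required identity with exponent $\beta'$.

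All the computations here are routine. The only spots needing a little care are the coherent choice of pairwise exponents in the second item, and, in the third item, the bookkeeping observation that a nested insertion collapses to a single insertion with a larger two-sided context — which is exactly what transfers the Markov substitute property from $B$ to $\alpha(x, B)$.
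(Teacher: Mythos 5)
Your proof is correct and follows essentially the same route as the paper: the first two items are handled by restricting, respectively gluing, pairwise exponents (which the paper dismisses as immediate consequences of \eqref{eq:swapsub}), and the third item uses exactly the paper's computation, collapsing the nested insertion $\alpha\bigl(w,\alpha(x,y)\bigr)$ into a single insertion $\alpha(x',y)$ with the enlarged context $x'=\bigl(\gamma(w_1,x_1),\gamma(x_2,w_2)\bigr)$ and transporting the exponent via the injective map $y\mapsto\alpha(x,y)$. Nothing further is needed.
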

\begin{proof}
The first points are straightforward consequences of \cref{eq:swapsub}. 
As for the last point, writing $x = (x_1, x_2)$, where $x_1, x_2 \in D^*$,   
we see by the definitions that for any $(z_1,z_2) \in (D^*)^2$ 
and any $y, y' \in B$, such that $
\bigl\{ \alpha\bigl(z, \alpha(x, y) \bigr), 
\alpha\bigl(z, \alpha(x, y') \bigr) \bigr\} \subset \C{D}$, 
\begin{multline*}
\Up{P} \bigl[ \alpha \bigl( z, \alpha(x,y') \bigr) \bigr]   
= \Up{P} \bigl[ \alpha \bigl( \bigl( \gamma(z_1, x_1), \gamma(x_2, z_2)
\bigr) , y' \bigr) 
\bigr] \\ = \Up{P} \bigl[ \alpha \bigl( 
\gamma(z_1,x_1), \gamma(x_2, z_2), y) \bigr]  \, \exp \bigl( \beta (y, y') 
\bigr)  
= \Up{P} \bigl[ \alpha \bigl( z, \alpha(x, y) \bigr) \bigr] \, \exp
\bigl( \beta (y, y') \bigr),
\end{multline*}
proving that $\alpha(x, B)$ is a Markov substitute set with substitute 
exponent
\[
\beta \bigl( \alpha(x, y), \alpha(x, y') \bigr) 
= \beta (y,y'), \qquad y, y' \in B. 
\]
\end{proof}
\begin{dfn}
For any given family $\C{B}$ of subsets of $D^+$, we will say that the random 
process $S \in D^+$ is a $\C{B}$-Markov substitute process on the 
domain $\C{D}$ if and only if 
all the members of $\C{B}$ are Markov substitute sets of its probability distribution $\B{P}_{S}$. We will say that the probability distribution 
$\B{P}_S$ of $S$ is a $\C{B}$-Markov 
substitute probability measure on $\C{D}$. We will use the notation 
$\F{M}(\C{D}, \C{B})$ to denote the set of $\C{B}$-Markov substitute 
probability measures on $\C{D}$. 
\end{dfn}

\section{Markov substitute processes as exponential families} 

To describe the possible supports of $\C{B}$-Markov substitute 
processes, we introduce the equivalence relation $\sim_{\C{B}}$ 
on $\C{D}$ that is the smallest one such that, 
for any $x \in (D^*)^2$, $y, y' \in B \in \C{B}$, such 
that $\alpha(x, \{y,y'\}) \subset \C{D}$,  
\[
\alpha(x,y) \sim_{\C{B}} \alpha(x,y'). 
\]
In other words, $\quotient{\C{D}}{\sim_{\C{B}}}$ 
are the connected components of the graph 
\begin{equation}
\label{eq:3.1}
\C{G}(\C{D}, \C{B}) = \bigl\{ \bigl( \alpha(x, y), \alpha(x,y') \bigr), 
x \in (D^*)^2, y, y' \in B \in \C{B} \bigr\} \cap \bigl( 
\C{D} \times \C{D} \bigr).
\end{equation}
It turns out that for any domain $\C{D} \subset D^+$ and any family $\C{B}$ of subsets of 
$D^+$, $\F{M}(\C{D}, \C{B}) \neq \varnothing$. To prove this, 
it will be useful 
to introduce the special family of independent Markov substitute 
processes. Given a strict sub-probability measure $\xi \in \C{M}_+(D)$, 
let us put 
\[ 
r = 1 - \sum_{w \in D} \xi(w) > 0, 
\] 
and let us define the independent process $\wt{S}_{\xi}$ by its 
distribution 
\[ 
\B{P} \bigl( \wt{S}_{\xi} = w_{1:k} \bigr) = \frac{r}{1 - r} \prod_{j=1}^k 
\xi(w_j).
\] 
Let us remark that $\B{P} [ \ell(\wt{S}_{\xi}) = L ] = r(1-r)^{L-1}$. 
It is easy to see from the definition that $\supp(\xi)^+ = \supp (\wt{S}_{\xi}
)$ is 
a Markov substitute set of the independent Markov substitute process 
$\wt{S}_{\xi}$ 
and that its substitute exponent is equal to 
\[
\beta (y, y') = \log \biggl( \frac{\B{P}(\wt{S}_{\xi} = y')}
{\B{P}(\wt{S}_{\xi} = y)} \biggr), \qquad y, y' \in \supp(\xi)^+.
\]
\begin{prop}
\label{prop:2.2}
For any domain $\C{D}$ and any family $\C{B}$ of subsets of $D^+$, for 
any $\C{B}$-Markov substitute measure $\Up{P}$ on $\C{D}$, 
there is a subset $\C{C}_{\Up{P}} \subset \quotient{\C{D}}{\sim_{\C{B}}}$ 
of components of $\C{D}$ 
such that the support $\supp ( \Up{P})$ is of the form
\begin{equation}
\label{eq:Bsupp}
\supp ( \Up{P} ) = \bigcup_{C \in \C{C}_{\Up{P}}} C.
\end{equation}
Conversely, for any subset $\C{C} \subset \quotient{\C{D}}{\sim_{\C{B}}}$ 
of components of $\C{D}$, 
any strict sub-probability measure $\xi \in \C{M}_+(D)$, 
such that $\supp(\xi) = D$, and any probability measure $\mu
\in \C{M}_+^1 \bigl( \quotient{\C{D}}{\sim_{\C{B}}}
\bigr)$, such that $\supp(\mu) = \C{C}$, 
the probability measure on $\C{D}$ defined as 
\begin{equation}
\label{eq:Bproc}
\Up{P}(s)  = \sum_{C \in \C{C}} \B{1}(s \in C) \mu(C) \B{P}
\bigl( \wt{S}_{\xi} = s \, | \, \wt{S}_{\xi} \in C \bigr), \qquad s \in \C{D},
\end{equation}
is a $\C{B}$-Markov substitute measure on the domain $\C{D}$ with 
support  
\begin{equation}
\label{eq:Bsupp2}
\supp(\Up{P}) = \bigcup_{C \in \C{C}} C. 
\end{equation}
\end{prop}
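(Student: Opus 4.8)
The plan is to prove the two halves separately; both rest on the remark that a single edge $\{\alpha(x,y),\alpha(x,y')\}$ of the graph $\C{G}(\C{D},\C{B})$ of \cref{eq:3.1} ties the two $\Up{P}$-probabilities together through \cref{eq:swapsub}, so that one of them is positive if and only if the other is.

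For the direct implication, recall that $\supp(\Up{P}) \subset \C{D}$ by the very definition of a $\C{B}$-Markov substitute measure, so it makes sense to ask whether $\supp(\Up{P})$ is saturated for $\sim_{\C{B}}$. Let $s \in \supp(\Up{P})$ and let $s' \in \C{D}$ with $s \sim_{\C{B}} s'$. Since $\sim_{\C{B}}$ is by definition the equivalence relation generated by the edges of $\C{G}(\C{D},\C{B})$, there is a finite chain $s = s_0, s_1, \dots, s_n = s'$ lying in $\C{D}$ in which each $\{s_i, s_{i+1}\}$ is such an edge, that is of the form $\{\alpha(x,y),\alpha(x,y')\}$ with $y,y' \in B$ for some $B \in \C{B}$ and $\alpha(x,\{y,y'\}) \subset \C{D}$. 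Applying \cref{eq:swapsub} along this chain (using the skew-symmetry of $\beta$ to go in either direction) and inducting on $i$, each $\Up{P}(s_i)$ is positive, hence $s' \in \supp(\Up{P})$. Therefore every $\sim_{\C{B}}$-class that meets $\supp(\Up{P})$ is contained in it, and \cref{eq:Bsupp} holds with $\C{C}_{\Up{P}}$ the set of classes meeting $\supp(\Up{P})$.

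For the converse, I would first check that \cref{eq:Bproc} is a genuine probability measure on $\C{D}$. Because $\supp(\xi) = D$, the independent process satisfies $\supp(\wt{S}_{\xi}) = D^+$, so $\B{P}(\wt{S}_{\xi} \in C) > 0$ for every class $C$ (classes being nonempty subsets of $\C{D} \subset D^+$); hence each conditional law $\B{P}(\wt{S}_{\xi} = \cdot \mid \wt{S}_{\xi} \in C)$ is well defined and sums to $1$ over $C$, and integrating against $\mu$, whose support is $\C{C}$, gives total mass $1$. The same computation shows $\Up{P}(s) > 0$ exactly when $s$ belongs to some $C \in \C{C}$, which is \cref{eq:Bsupp2}. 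It remains to produce a substitute exponent for each $B \in \C{B}$, and here I would simply reuse the one of $\wt{S}_{\xi}$: since $B \subset D^+ = \supp(\xi)^+$ and $\supp(\xi)^+$ is a Markov substitute set of $\wt{S}_{\xi}$, \cref{prop1.1} makes $B$ a Markov substitute set of $\wt{S}_{\xi}$ with the (skew-symmetric, context-free) exponent $\beta(y,y') = \log\bigl(\B{P}(\wt{S}_{\xi} = y')/\B{P}(\wt{S}_{\xi} = y)\bigr)$. To see this $\beta$ also works for $\Up{P}$, fix $x \in (D^*)^2$ and $y, y' \in B$ with $\alpha(x,\{y,y'\}) \subset \C{D}$; then $\alpha(x,y)$ and $\alpha(x,y')$ lie in one and the same class $C$. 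If $C \notin \C{C}$ both sides of \cref{eq:swapsub} vanish; if $C \in \C{C}$, then on $C$ the measure $\Up{P}$ is proportional to $s \mapsto \B{P}(\wt{S}_{\xi} = s)$, the proportionality constant $\mu(C)/\B{P}(\wt{S}_{\xi} \in C)$ cancels in the ratio, and $\Up{P}(\alpha(x,y'))/\Up{P}(\alpha(x,y)) = \B{P}(\wt{S}_{\xi} = \alpha(x,y'))/\B{P}(\wt{S}_{\xi} = \alpha(x,y)) = \exp(\beta(y,y'))$.

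I do not expect a genuine obstacle here; the step that most needs care is the last one, where one must combine two facts: that the product structure of $\wt{S}_{\xi}$ makes its substitute exponent depend only on $y, y'$ and not on the context $x$, and that $\alpha(x,y)$ and $\alpha(x,y')$ always sit in the same $\sim_{\C{B}}$-class, so that the per-class normalisations introduced by \cref{eq:Bproc} drop out. In the direct part, the only subtlety is to remember that $\sim_{\C{B}}$ is merely the relation generated by the one-step relations, so one argues by induction along chains in $\C{G}(\C{D},\C{B})$ rather than by a single use of \cref{eq:swapsub}.
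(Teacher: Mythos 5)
Your proposal is correct and follows essentially the same route as the paper: in the direct part you propagate positivity of $\Up{P}$ along chains of edges of $\C{G}(\C{D},\C{B})$ to show the support is a union of $\sim_{\C{B}}$-classes, and in the converse you use the context-free substitute exponent of the independent process $\wt{S}_{\xi}$, noting that $\alpha(x,y)$ and $\alpha(x,y')$ lie in the same class so the per-class normalisation $\mu(C)/\B{P}(\wt{S}_{\xi}\in C)$ cancels — exactly the paper's computation (the paper folds your separate $C\notin\C{C}$ case into the same formula via $\mu(C)=0$).
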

\begin{proof}
Let 
\[ 
\C{C}_{\Up{P}} = \bigl\{ C \in \quotient{\C{D}}{\sim_{\C{B}}} \, ; \, 
\Up{P}(C) > 0 \bigr\},
\]
so that obviously $\ds \supp(\Up{P}) \subset \bigcup_{C \in \C{C}_{\Up{P}}}
C$. 
Conversely, for any $C \in \C{C}_{\Up{P}}$, 
there is $s_C \in C$ such that $\Up{P}(s_C) > 0$. 
By definition of $\quotient{\C{D}}{\sim_{\C{B}}}$, for any $s \in C \in \C{C}_{\Up{P}} \in \quotient{\C{D}}{\sim_{\C{B}}}$, 
there are finite sequences 
\begin{align*}
s_k & \in C, & 0 & \leq k \leq \ell,\\  
x_k & \in (D^*)^2 , & 1 & \leq k \leq \ell, \\ 
B_k & \in \C{B}, & 1 & \leq k \leq \ell, \\
(y_k, y_k') & \in B_k^2, & 1 & \leq k \leq \ell, 
\end{align*}
such that $s_0 = s_C$, $s_{\ell} = s$, $s_{k-1} = \alpha(x_k, y_k')$ and 
$s_k = \alpha(x_k, y_k)$, $1 \leq k \leq \ell$. 
Consequently
\[
\Up{P}(s) = \Up{P}(s_C) \prod_{k=1}^\ell 
\exp \bigl( \beta ( y'_k, y_k ) \bigr)  
> 0, 
\]
so that $C \subset \supp(\Up{P})$. Therefore $\ds \bigcup_{C \in \C{C}_{\Up{P}}}
C \subset \supp ( \Up{P} )$, proving \cref{eq:Bsupp}.   

Let us now come to the second part of the proposition. 
The support of $\wt{S}_{\xi}$ being $D^+$, it is clear that 
the support of $S$ defined by \cref{eq:Bproc} 
is the one defined by \cref{eq:Bsupp2}.
For any $B \in \C{B}$, let us define 
\[ 
\beta (y,y') = \log \biggl( 
\frac{\B{P}( \wt{S}_{\xi} = y')}{\B{P}( \wt{S}_{\xi} = y)} 
\biggr), \qquad
y, y' \in B.
\] 
For any $B \in \C{B}$, any $x \in (D^*)^2$, any $y, y' \in B$, 
such that $\alpha(x, \{y,y'\}) \subset \C{D}$, 
there is $C \in \quotient{\C{D}}{\sim_{\C{B}}}$ such that $\alpha(x, \{y,y'\}) \subset C$ and  
\begin{align*}
\Up{P} \bigl( \alpha(x, y') \bigr) & 
= \frac{ \mu(C) \B{P} \bigl(
\wt{S}_{\xi} = \alpha(x, y') \bigr)}{ 
\B{P}\bigl( \wt{S}_{\xi} \in C \bigr)} \\
& = \frac{ \mu(C) \B{P} \bigl(
\wt{S}_{\xi} = \alpha(x, y) \bigr) \B{P} \bigl( \wt{S}_{\xi} = y' \bigr)}{ 
\B{P}\bigl( \wt{S}_{\xi} \in C \bigr) 
\B{P}\bigl( \wt{S}_{\xi} = y \bigr)} \\
& = \Up{P} \bigl( \alpha(x,y) \bigr) \exp \bigl( \beta (y,y') \bigr),
\end{align*}
proving that $B$ satisfies \vref{eq:swapsub}, and therefore that 
$\Up{P} \in \F{M}(\C{D}, \C{B})$.
Remark that in this proof, we have 
taken advantage of the fact that the Markov substitute 
property is stable by conditioning. 
\end{proof}

\begin{lemma}
\label{lem:equal}
In the case when $\F{M}(\C{D}, \C{B}) = \F{M}(\C{D}, \C{B}')$, 
then $\quotient{\C{D}}{\sim_{\C{B}}} = 
\quotient{\C{D}}{\sim_{\C{B}'}}$. 
\end{lemma}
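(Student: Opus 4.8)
The plan is to recover the partition $\quotient{\C{D}}{\sim_{\C{B}}}$ from the sole knowledge of the set of supports of the measures belonging to $\F{M}(\C{D}, \C{B})$. Since the hypothesis $\F{M}(\C{D}, \C{B}) = \F{M}(\C{D}, \C{B}')$ forces these two sets of supports to coincide, this will at once give the asserted equality of partitions.

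First, I would introduce $\mathcal{S}_{\C{B}} = \bigl\{ \supp(\Up{P}) \ ; \ \Up{P} \in \F{M}(\C{D}, \C{B}) \bigr\}$ and prove
\[
\mathcal{S}_{\C{B}} = \Bigl\{ \bigcup_{C \in \C{C}} C \ ; \ \varnothing \neq \C{C} \subseteq \quotient{\C{D}}{\sim_{\C{B}}} \Bigr\} .
\]
The inclusion ``$\subseteq$'' is exactly the first assertion of \cref{prop:2.2} (a support is nonempty and, by \cref{eq:Bsupp}, a union of $\sim_{\C{B}}$-classes). For ``$\supseteq$'', given a nonempty $\C{C} \subseteq \quotient{\C{D}}{\sim_{\C{B}}}$, I would apply the converse part of \cref{prop:2.2} with the uniform strict sub-probability $\xi(w) = 1/\bigl(2\,\card(D)\bigr)$, which has full support $D$, and with any $\mu \in \C{M}_+^1\bigl( \quotient{\C{D}}{\sim_{\C{B}}} \bigr)$ such that $\supp(\mu) = \C{C}$; such a $\mu$ exists because $\quotient{\C{D}}{\sim_{\C{B}}}$ is at most countable, $D^+$ being countable. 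The measure defined by \cref{eq:Bproc} then lies in $\F{M}(\C{D}, \C{B})$ and, by \cref{eq:Bsupp2}, has support $\bigcup_{C \in \C{C}} C$.

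Next, I would recover each class from $\mathcal{S}_{\C{B}}$. Fix $s \in \C{D}$ and write $[s]$ for its $\sim_{\C{B}}$-class. Choosing $\C{C} = \{ [s] \}$ above shows $[s] \in \mathcal{S}_{\C{B}}$; and any $S \in \mathcal{S}_{\C{B}}$ containing $s$ must contain $[s]$, since $S$ is a union of $\sim_{\C{B}}$-classes. Hence
\[
[s] = \bigcap \bigl\{ S \in \mathcal{S}_{\C{B}} \ ; \ s \in S \bigr\}, \qquad s \in \C{D} .
\]
Since $\F{M}(\C{D}, \C{B}) = \F{M}(\C{D}, \C{B}')$ yields $\mathcal{S}_{\C{B}} = \mathcal{S}_{\C{B}'}$, the right-hand side of this identity is unchanged when $\C{B}$ is replaced by $\C{B}'$; therefore the $\sim_{\C{B}}$-class and the $\sim_{\C{B}'}$-class of each $s \in \C{D}$ coincide, that is, $\quotient{\C{D}}{\sim_{\C{B}}} = \quotient{\C{D}}{\sim_{\C{B}'}}$. (The case $\C{D} = \varnothing$ is trivial.)

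I do not anticipate a real difficulty; the only mildly delicate point is checking that the hypotheses of the converse half of \cref{prop:2.2} — a strict sub-probability on $D$ with full support, and a probability measure on $\quotient{\C{D}}{\sim_{\C{B}}}$ with an arbitrarily prescribed, at most countable, support — can always be met, which is routine.
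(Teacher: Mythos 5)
Your argument is correct and rests on exactly the same ingredients as the paper's own proof: the converse half of \cref{prop:2.2} to realize each component of $\quotient{\C{D}}{\sim_{\C{B}}}$ as the support of some measure of $\F{M}(\C{D},\C{B})$, and the direct half to see that every such support is a union of components, combined with the symmetry of the roles of $\C{B}$ and $\C{B}'$. The only difference is presentational: you recover each class as the intersection of the supports containing a given point, whereas the paper concludes that each partition is coarser than the other; this is the same mechanism in different packaging.
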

\begin{proof}
Assume that the hypothesis stated in the lemma is satisfied. 
According to the previous proposition, for any $C \in \quotient{\C{D}}{\sim_{
\C{B}}}$, there is $\Up{P} \in \F{M}(\C{D}, \C{B})$ such that $\supp ( \Up{P} 
) = C$. Since $\Up{P} \in \F{M}(\C{D}, \C{B}')$, there is also $\C{C}' 
\subset \quotient{\C{D}}{\sim_{\C{B}'}}$ such that $C = \supp(\Up{P}) 
= \bigcup \C{C}'$. Therefore $\quotient{\C{D}}{\sim_{\C{B}}}$ is 
a coarser partition than $\quotient{\C{D}}{\sim_{\C{B}'}}$. 
As $\C{B}$ and $\C{B}'$ play symmetric roles, 
the reverse is also true, so that the two partitions are equal, 
each being coarser than the other one.  
\end{proof}

We are now going to show that the set $\F{M}(\C{D}, \C{B})$ 
of $\C{B}$-Markov processes on the domain $\C{D}$ forms 
an exponential family, although we will unfortunately 
do it in a non constructive way: we will not be able to 
provide an efficient algorithm to compute the corresponding 
energy function (or in other terms sufficient statistics). 
Moreover, we will restrict ourselves to the case when 
$\C{B}$ is a finite family of finite subsets of $D^+$. 

\begin{dfn}
Given a domain $\C{D} \subset D^+$, a set $\C{B}$ 
of subsets of $D^+$ and a subset $\C{C} \subset 
\quotient{\C{D}}{\sim_{\C{B}}}$ of components of 
$\quotient{\C{D}}{\sim_{\C{B}}}$,  
define $\F{M}_{\C{C}}\bigl( \C{D}, \C{B} \bigr)$ 
as the set of $\C{B}$-Markov probability measures on $\C{D}$ 
whose support is $\ds \bigcup \C{C} = \bigcup_{C \in \C{C}} C$.  
\end{dfn}

\begin{prop}
\label{prop:expMod}
Given any domain $\C{D} \subset D^+$---that may be infinite and may be 
$D^+$ itself---any finite set $\C{B}$ of finite subsets of $D^+$, 
there is a finite set of pairs $\C{P}$, that we can choose such that 
each one is included in a member of $\C{B}$, such that the sets 
of $\C{B}$-Markov and $\C{P}$-Markov substitute processes on the domain 
$\C{D}$ are the 
same---that is such that $\F{M}(\C{D}, \C{B}) = \F{M}(\C{D}, \C{P})$.
Moreover, since it is finite, we can if required choose $\C{P}$ to be minimal for the inclusion 
relation---meaning that removing a pair from $\C{P}$ would 
change $\F{M}(\C{D}, \C{P})$ to a broader model. 

For any finite subset of components 
$\C{C} \subset \quotient{\C{D}}{\sim_{\C{B}}}$, 
let us define the set of active pairs as 
\[ 
\C{A} = \bigl\{ \{y,y'\} \in \C{P}, \text{ there is } 
x \in (D^*)^2 \text{ such that } \alpha(x,\{y,y'\}) \subset \bigcup_{C \in  
\C{C}} C \bigr\}. 
\] 
There is a  
non empty subset of pairs $\C{F} 
\subset \C{A}$, a matrix $ \bigl( e_{i,j}, i \in \C{F}, 
j \in \C{A} \setminus \C{F} 
\bigr)$, a finite index set $\C{I} = \C{F} \cup \C{C}$, 
and energy functions 
\begin{align*}
U_i : \bigcup_{C \in \C{C}} C & \rightarrow \B{R}, \\  
s & \mapsto U_i(s), \qquad i \in \C{I}, 
\end{align*}
such that the set $\F{M}_{\C{C}}(\C{D}, \C{B})$ 
is the linear exponential family
\begin{align*}
\F{M}_{\C{C}} ( \C{D}, \C{B} ) & = 
\Biggl\{ \Up{P}_{\beta} \,  ; \,  \beta \in \F{B} \subset \B{R}^{\C{I}},
 \Up{P}_{\beta} (s) = Z_{\beta}^{-1} 
\exp \biggl( - \sum_{i \in \C{I}} 
\beta_i U_i(s) \biggr), s \in \bigcup \C{C} 
\Biggr\}, \\
\text{where } Z_{\beta} & = \sum_{s \in \bigcup \C{C}} \exp \biggl( - \sum_{i \in \C{I}} 
\beta_i U_i(s) \biggr), \\ 
\text{and } \F{B} & = \Bigl\{ \beta \in \B{R}^{\C{I}}, 
Z_{\beta} < \infty \Bigr\}. 
\end{align*}
Moreover, for any $\beta \in \F{B}$,  
the substitute exponent under $\Up{P}_{\beta}$ on $\C{D}$ of any pair $i = 
\{ y_{i,0}, y_{i,1} \} 
\in \C{F}$, indexed in a suitable way compatible with the definition 
of $U_i$, is given by 
\[ 
\beta (y_{i,0},y_{i,1}) = \beta_i,
\] 
whereas the substitute exponent of $j \in \C{A} \setminus \C{F}$ 
is given by
\[
\beta (y_{j,0},y_{j,1}) = \sum_{i \in \C{F}} 
\beta_i \, e_{i,j},
\]
and the substitute exponent of $j \in \C{P} \setminus \C{A}$ 
can be arbitrarily set to any real value. 
On the other hand, the probabilities $\bigl( \Up{P}_{\beta}(C), C \in \C{C} \bigr)$ 
are given by 
\[ 
\Up{P}_{\beta} (C) = Z_{\beta}^{-1} \exp(\beta_C) \sum_{s \in C} 
\exp \biggl( - \sum_{i \in \C{F}} \beta_i \, U_i(s) \biggr).
\] 
As a consequence, for any $\beta, \beta' \in \F{B}$, 
\[ 
\Up{P}_{\beta'} = \Up{P}_{\beta} \iff 
\begin{cases}
\beta_i' = \beta_i, & i \in \C{F}, \\ 
\beta_C' = \beta_C + \log \bigl( Z_{\beta'} / Z_{\beta} 
\bigr), & C \in \C{C}. 
\end{cases} 
\] 
\end{prop}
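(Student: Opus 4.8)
The plan is to reduce the potentially infinite set of swap constraints to a finite linear system over the substitute exponents, and then realize the resulting family as a linear exponential family by a change of parametrization. First I would fix a finite subset of components $\C{C}$ and the corresponding set of active pairs $\C{A}$; only active pairs impose nontrivial constraints on a measure supported on $\bigcup\C{C}$. For a fixed reference measure $\Up{P}_0 \in \F{M}_{\C{C}}(\C{D},\C{B})$ (which exists by \vref{prop:2.2}, taking e.g.\ $\xi$ uniform and $\mu$ uniform on $\C{C}$), any other $\Up{P}\in\F{M}_{\C{C}}(\C{D},\C{B})$ is determined, along each component $C$, up to the multiplicative constant $\Up{P}(C)/\Up{P}_0(C)$, by the collection of ratios $\exp(\beta(y,y'))$ for active pairs $\{y,y'\}$; this is exactly the content of the connectedness argument already used in the proof of \vref{prop:2.2} (the telescoping product $\Up{P}(s)=\Up{P}(s_C)\prod_k\exp(\beta(y'_k,y_k))$). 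So as a set, $\F{M}_{\C{C}}(\C{D},\C{B})$ is parametrized by the admissible values of $\bigl(\beta(y,y')\bigr)_{\{y,y'\}\in\C{A}}$ together with $\bigl(\Up{P}(C)\bigr)_{C\in\C{C}}$.

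Next I would pin down which vectors $\bigl(\beta(y,y')\bigr)_{\{y,y'\}\in\C{A}}$ are admissible. The point is that two active pairs sharing a context force a linear relation among the $\beta$'s: if $\{y_1,y_1'\}$ and $\{y_2,y_2'\}$ sit in members of $\C{B}$ and there are contexts realizing both substitutions inside the same component, iterating \vref{eq:swapsub} around a cycle in the graph $\C{G}(\C{D},\C{B})$ yields that a $\ZZ$-linear combination of the $\beta(\cdot,\cdot)$ vanishes. Conversely, \vref{prop:2.2} shows every such constraint-satisfying $\beta$ is attained. Hence the admissible $\beta$-vectors form a linear subspace $V\subset\B{R}^{\C{A}}$ (the orthogonal of the cycle space of a suitable finite graph on the finite vertex set $\bigcup_{y\in\bigcup\C{B}}\{y\}$). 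Choosing $\C{F}\subset\C{A}$ so that $\bigl(\beta(y,y')\bigr)_{\{y,y'\}\in\C{F}}$ is a basis of coordinates for $V$ gives the matrix $(e_{i,j})$ expressing the remaining substitute exponents $\beta(y_{j,0},y_{j,1})=\sum_{i\in\C{F}}\beta_i e_{i,j}$; this also furnishes the minimal-$\C{P}$ statement, since a pair whose constraint is implied by the others can be dropped without enlarging the model, and finiteness lets us prune down to a minimal such $\C{P}$, while \cref{lem:equal} guarantees the component structure is unchanged.

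Then comes the construction of the energy functions. For each $i\in\C{F}$ I would build $U_i:\bigcup\C{C}\to\B{R}$ additively along the graph: set $U_i(s_C)=0$ on a chosen root of each component, and whenever $s=\alpha(x,y)$, $s'=\alpha(x,y')$ with $\{y,y'\}$ contributing coefficient $c$ to $\beta_i$ in the basis expansion, demand $U_i(s)-U_i(s')=c$. Consistency of this definition is exactly the statement that $V$ was defined as the orthogonal of the cycle space, so no contradiction arises; this is the step I expect to be the main obstacle, because one must check that the recursive definition is independent of the path chosen in the (possibly infinite, but locally finitely-generated) graph, and that it is well-defined simultaneously for all $i\in\C{F}$. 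For $C\in\C{C}$ set $U_C=\B{1}(\cdot\in C)$, possibly shifted so that $\wt{S}_\xi$ enters correctly; and absorb the reference weights $\log\Up{P}_0(s)$ into the constant term. With these $U_i$ in hand, \vref{eq:swapsub} becomes the assertion $\Up{P}(s)=Z_\beta^{-1}\exp(-\sum_{i\in\C{I}}\beta_i U_i(s))$ for $\beta_i$ ($i\in\C{F}$) the free substitute exponents and $\beta_C$ the log-weights of the components, and conversely every such $\Up{P}_\beta$ with $Z_\beta<\infty$ satisfies the swap identities by the same telescoping computation as in \vref{prop:2.2}. Finally, the formula for $\Up{P}_\beta(C)$ follows by summing $\Up{P}_\beta(s)$ over $s\in C$ and noting $U_C$ is constant there, and the identifiability statement $\Up{P}_{\beta'}=\Up{P}_\beta\iff\{\beta_i'=\beta_i\ (i\in\C{F}),\ \beta_C'=\beta_C+\log(Z_{\beta'}/Z_\beta)\ (C\in\C{C})\}$ follows because the $\beta_i$, $i\in\C{F}$, are genuine substitute exponents (hence recoverable from $\Up{P}_\beta$ via ratios of probabilities on active pairs), while the $\beta_C$ are only defined up to the global normalization, exactly the indeterminacy exhibited.
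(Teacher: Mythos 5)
Your plan follows essentially the same route as the paper: reduce $\C{B}$ to a finite set of pairs, parametrize a measure in $\F{M}_{\C{C}}(\C{D},\C{B})$ by the substitute exponents of the active pairs together with the component weights $\Up{P}(C)$, identify the admissible exponent vectors as the linear subspace cut out by the loop (cycle) constraints, pick free coordinates $\C{F}$ with the matrix $(e_{i,j})$, build the energies $U_i$ by summing increments along paths to a root $s_C$ (well-definedness coming from the cycle constraints), add the component indicators, and check both inclusions and identifiability. Two local inaccuracies are worth flagging, though neither breaks the argument. First, the constraint space is the orthogonal of the span of the signed pair-counting vectors of loops in the state graph $\C{G}(\C{D},\C{B})$ restricted to $\bigcup_{C\in\C{C}} C$, which is in general an infinite graph; it is not the cycle space of a finite graph on the syntagms $\bigcup_{B\in\C{B}}B$, and indeed the constraints (hence $\C{F}$ and $(e_{i,j})$) depend on the chosen support $\C{C}$, as the paper's example with $C_1$ versus $C_2$ shows. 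Finiteness enters only through the finite dimension of $\B{R}^{\C{A}}$; the paper implements this by extracting a maximal linearly independent subfamily of the loop-counting functions on $\F{L}$. Second, \vref{prop:2.2} does not show that every constraint-satisfying exponent vector is attained---it only produces one particular measure per support; attainment of all $\beta\in\F{B}$ is exactly what your path-defined energy construction (equivalently the paper's construction of $\Up{P}_{\beta}$) establishes, so that citation is a misattribution rather than a missing step.
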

\begin{proof}
Let us start with the set of pairs 
\[ 
\C{P}' = \bigl\{ \{y , y'\} \subset B \in \C{B}, y \neq y' \bigr\}.
\] 
In view of the second statement of \vref{prop1.1}, 
the property that $\Up{P} \in \F{M}(\C{D}, \C{B})$ 
can be reformulated as 
\begin{multline*}
\Up{P}\bigl( \alpha(x, y') \bigr)
= \Up{P}\bigl( \alpha(x, y) \bigr) \exp \bigl( \beta(y,y') \bigr), 
\\ x \in (D^*)^2, (y,y') \in \bigcup_{B \in \C{B}} B^2, 
\alpha(x, \{y,y'\}) \subset \C{D},
\end{multline*}
for some global exponent function $\beta : \bigcup_{B \in \C{B}} 
B^2 \rightarrow \B{R}$. Since 
\[
\bigcup_{B \in \C{B}} B^2 = 
\bigcup_{B \in \C{P}'} B^2, 
\] 
this shows that $\F{M}(\C{D}, \C{B}) = \F{M}(\C{D}, \C{P}')$.

Since $\C{B}$ is assumed to be a finite family of finite sets, 
$\C{P}'$ is a finite set of pairs and 
we can obtain if wanted a minimal set of pairs $\C{P}$ 
by removing redundant pairs from $\C{P}'$.

Remark that according to \vref{lem:equal}, $\quotient{\C{D}}{\sim_{\C{B}}} 
= \quotient{\C{D}}{\sim_{\C{P}}}$.

For any $\C{C} \subset \quotient{\C{D}}{\sim_{\C{B}}}$, 
according to \vref{prop:2.2}, the set $\F{M}_{\C{C}}(\C{D}, \C{B})$ 
is non empty, 
and is equal to $\F{M}_{\C{C}}(\C{D}, \C{P})$, 
by construction of $\C{P}$. 

Let us index the corresponding set of active pairs $\C{A}$
defined in the proposition as 
\[ 
\C{A} = \bigl\{ (y_{i,0}, y_{i,1}), 1 \leq  i \leq I \bigr\}.
\] 

For any two states $s$ and $s' \in C \in \C{C}$, 
let us define the set of paths connecting $s$ to $s'$ as 
\[ 
\F{P}_{s,s'} = \Bigl\{ (x_j, i_j, \sigma_j), 0 \leq j \leq L, 
L \in \B{N}, x_j \in (D^*)^2, 1 \leq i_j \leq I, \sigma_j \in \{0,1\} \Bigr\},
\]  
such that 
\begin{align*}
\alpha(x_0, y_{i_0,\sigma_0} ) & = s, \\ 
\alpha(x_L, y_{i_L, 1 - \sigma_L} ) & = s', \\ 
\alpha(x_j, y_{i_j, \sigma_j}) & = \alpha(x_{j-1}, y_{i_{j-1}, 
1 - \sigma_{j-1}}) \in \C{D}, 
\qquad 0 < j \leq L.
\end{align*}
We see from the description of $\quotient{\C{D}}{\sim_{\C{P}}}$ 
as the connected components of the graph $ \bigl\{ 
\bigl(\alpha(x, y),  \alpha(x, y') \bigr), x \in (D^*)^2, \{y,y'\} \in \C{P} 
\bigr\} \cap \C{D}^2$ defined by $\C{P}$ 
that $\F{P}_{s,s'} \neq \varnothing$ and that in fact 
\[
\alpha(x_j, y_{i_j, \sigma_j}) = \alpha(x_{j-1}, y_{i_{j-1}, 
1 - \sigma_{j-1}}) \in C, \qquad 0 < j \leq L. 
\] 
For any $\pi = (x_j, i_j, \sigma_j)_{0 \leq j \leq L} \in \F{P}_{s,s'}$,  
let 
\[
U_i(\pi) = \sum_{j=0}^L \B{1} \bigl( i_j = i \bigr) \bigl( 2 \sigma_j - 1 
\bigr).
\]
Let us choose for any $C \in \C{C}$ a reference state $s_C$, and 
consider the set of loops $\ds \F{L} = \bigcup_{C \in \C{C}} \F{P}_{s_C, s_C}$. 
Consider the $I$ 
functions 
\begin{align}
U_i : \F{L} & \rightarrow \B{R}, \nonumber  \\ 
\pi & \mapsto U_i(\pi), \label{eq:Ufunc}
\end{align}
where $1 \leq i \leq I$. 
Reindexing the set of pairs if necessary, assume that $U_{k}$,  
$K < k \leq I$ is a maximum free subset of $\bigl( U_i,  1 \leq i \leq I 
\bigr)$, so that there is a matrix $e_{i,j}$, such that 
\begin{equation}
\label{eq:Udep}
U_i = - \sum_{k=K+1}^I e_{i,k} U_k, \qquad 1 \leq i \leq K.
\end{equation}
(In the case when $U_i$, $1 \leq i \leq I$ are already 
linearly independent, it should be understood that $K = 0$ 
and that the above statement is void, since no index $i$ satisfies 
$1 \leq i \leq 0$. Let us also remark that we can have $K = I$ in the case 
when all the functions $U_i$ are equal to the null function.)
For any set of parameters $\beta = (\beta_i, 1 \leq i \leq I) \in \B{R}^I$, 
for any path $\pi = (x_j, i_j, \sigma_j)_{0 \leq j \leq L} \in \F{P}_{s,s'}$,  
let 
\[
w_{\beta} ( \pi ) = \sum_{j=0}^L  (1 - 2 \sigma_j) \beta_{i_j} 
= - \sum_{i=1}^I \beta_i U_i(\pi).
\]
\begin{lemma}
For any $\beta \in \B{R}^I$ 
such that 
\begin{equation}
\label{eq:2.6}
\beta_k = \sum_{i=1}^K \beta_i \, e_{i,k}, \qquad K < k \leq I,
\end{equation}
with the convention that in the case when $K = 0$, 
$\beta_k = 0$, $0 < k \leq I$, 
and in the case when $K = I$, the assumption is void, 
for any $C \in \C{C}$, 
any $s \in C$, and any $\pi, \pi' \in \F{P}_{s_C, s}$,
\[ 
w_{\beta} (\pi') = w_{\beta} (\pi).
\] 
\end{lemma}
\begin{proof}
There is $\pi'' \in \F{P}_{s_C, s_C}$ such that $w_{\beta} (\pi'')  
= w_{\beta}(\pi') - w_{\beta} (\pi)$ (where $\pi''$ is built in an obvious way as 
the concatenation of $\pi'$ and of the reverse of $\pi$). 
\Vref{eq:2.6} ensures that $w_{\beta}(\pi'') = 0$.  Indeed in this case
\begin{multline*}
w_{\beta}(\pi'') = - \sum_{i=1}^I \beta_i U_i(\pi'') = \sum_{i=1}^K \sum_{k=K+1}^I \beta_i 
e_{i,k} U_k (\pi'') - \sum_{k=K+1}^I \beta_k U_k(\pi'') 
\\ = \sum_{k=K+1}^I \Biggl( \sum_{i=1}^K \beta_i \, e_{i,k} - \beta_k \Biggr) 
U_k(\pi'') = 0.
\end{multline*}
\end{proof}
For any $s \in C \in \C{C}$, let us choose $\pi_s \in \F{P}_{s_C, s}$,
and define 
\begin{equation}
\label{eq:3.7.2}
U_i(s) = U_i(\pi_s) + \sum_{k=K+1}^I e_{i,k} \, U_k ( \pi_s ), \qquad 
1 \leq i \leq K, 
\end{equation}
so that, when \cref{eq:2.6} holds,  
\[
w_{\beta} (\pi_s) = \- \sum_{i=1}^K \beta_i \, U_i(s).
\]
Let us index the set of components $\C{C}$ (assumed to be finite) 
as 
\[ 
\C{C} = \{ C_{K+1}, \dots, C_{J} \}, 
\] 
and define 
\[
U_j (s) = - \B{1} ( s \in C_j ), \qquad K < j \leq J. 
\]
Define the parameter set $\F{B} \subset \B{R}^J$ as 
\[ 
\F{B} = \biggl\{ \beta \in \B{R}^J, Z_{\beta} \overset{\text{\rm def}}{=} 
\sum_{s \in \bigcup \C{C}} 
\exp \biggl( - \sum_{j=1}^J \beta_j U_j(s) \biggr) < \infty \biggr\}. 
\] 
For each $\beta \in \F{B}$, define $\Up{P}_{\beta} \in \C{M}_+^1 \bigl( 
\bigcup \C{C} \bigr)$ as 
\[
\Up{P}_{\beta}(s) = Z_{\beta}^{-1} \exp \biggl( - \sum_{j=1}^J 
\beta_j U_j(s) \biggr), \qquad s \in \bigcup_{C \in \C{C}} C.  
\]

\begin{lemma}
For any $\beta \in \F{B}$, $\Up{P}_{\beta} \in \F{M}_{\C{C}}(\C{D}, \C{P})$. 
Define
\[
\beta_k' = 
\begin{cases}
\ds \beta_k, & 1 \leq k \leq K, \\ 
\ds \sum_{i=1}^K \beta_i \, e_{i,k}, & K < k \leq I.
\end{cases}
\]
For any $i \in \{1, \dots, I\}$,  
\[
\beta (y_{i,0}, y_{i,1}) = \beta_i' 
\]
is the substitute exponent of $\{y_{i,0}, y_{i,1}\}$ for the Markov 
substitute measure $\Up{P}_{\beta}$ on $\C{D}$. When $\{y, y'\} \in \C{P} 
\setminus \C{A}$ is not an active pair, 
its substitute exponent $\beta (y,y')$ is not uniquely 
defined and can be set to any arbitrary real value. 
\end{lemma}

\begin{proof}
First of all, it is immediate to see that $\supp ( \Up{P}_{\beta} ) = 
\bigcup \C{C}$. 
Consider first any pair $\{y, y'\} \in \C{P} \setminus \C{A}$ and 
any $x \in (D^*)^2$ such that $\alpha(x, \{y, y'\}) \subset \C{D}$. 
Since $\alpha(x, \{y, y'\}) \not \subset \bigcup \C{C}$, 
and since $\alpha(x, y) \sim_{\C{P}} \alpha(x,y')$, 
necessarily 
\[
\alpha \bigl( x, \{y, y'\} \bigr) \cap \bigl( \bigcup \C{C} \bigr) = \varnothing, 
\]
so that $\Up{P}_{\beta}\bigl( \alpha(x, y) \bigr) = \Up{P}_{\beta}
\bigl( \alpha(x, y') \bigr) = 0$ and  
\[
0 = \Up{P}_{\beta} \bigl( \alpha(x, y') \bigr) = \Up{P}_{\beta}
\bigl( \alpha(x, y) \bigr) \exp \bigl( \beta (y,y') \bigr),
\]
for any choice of $\beta (y,y') \in \B{R}$. 
Consider any $i \in \{ 1, \dots,  I \}$ and any $x \in ( D^* )^2$, 
such that $z = \alpha(x, y_{i,0}) \in \C{D}$
and $z' = \alpha(x, y_{i,1}) \in \C{D}$. 
Since $z \sim_{\C{P}} z'$, there is $C \in \quotient{\C{D}}{\sim_{\C{P}}}$ 
such that $\{z, z'\} \subset C$. 
If $C \notin \C{C}$, then $\Up{P}_{\beta}(C) = 
\Up{P}_{\beta}(z) = \Up{P}_{\beta}(z') = 0$, so that 
obviously
\[
0 = \Up{P}_{\beta}(z') = \Up{P}_{\beta} (z) \exp( \beta_i' ). 
\]
Assume now on the other hand that $C \in \C{C}$. 
Let $\pi'$ be the 
concatenation of $\pi_z$ (defined before \vref{eq:3.7.2}) 
and $(x, i, 0)$. We see that  
$\pi' \in \F{P}_{s_C, z'}$, so that  
\[
- \sum_{j=1}^K \beta_j U_j(z') = 
w_{\beta'} (\pi_{z'}) = w_{\beta'} (\pi') = w_{\beta'} (\pi_{z}) 
+ \beta_i' 
= - \sum_{j=1}^K \beta_j U_j(z)  
+ \beta_i'.
\]
Remark also that $U_j(z) = U_j(z')$, $K < j \leq J$, since 
$z$ and $z'$ belong to the same component $C \in \C{C}$. 
Therefore
\begin{multline*}
\log \bigl( \Up{P}(z') \bigr) + \log ( Z_{\beta} ) 
= - \sum_{j=1}^J \beta_j U_j(z') 
\\ = - \sum_{j=1}^J \beta_j U_j(z) + \beta_i' 
= \log \bigl( \Up{P}_{\beta}(z) \bigr)
+ \log ( Z_{\beta} ) + \beta'_i, 
\end{multline*}
proving that 
\[
\Up{P}_{\beta}(z') = 
\Up{P}_{\beta}(z) \exp ( \beta_i' ), 
\]
and therefore that $\Up{P}_{\beta} \in \F{M}_{\C{C}}(\C{D}, \C{P})$
with the prescribed substitute exponents. 
\end{proof}
\begin{lemma}
Let us put 
\[ 
Z_{\beta_{1:K},j} = \sum_{s \in C_j} \exp \biggl( - \sum_{i=1}^K \beta_i U_i(s) 
\biggr), 
\] 
where $\beta_{1:K} = \{ \beta_1, \dots, \beta_K \}$, 
so that 
\[ 
Z_{\beta} = \sum_{j=K+1}^J \exp(\beta_j) Z_{\beta_{1:K},j}.
\] 
The parameters $\beta_{(K+1):J} = \{ \beta_{K+1}, \dots, \beta_J \}$ are 
related to $\Up{P}_{\beta}$ by the following relation
\[ 
\Up{P}_{\beta}(C_j) = \exp ( \beta_j ) \frac{ Z_{\beta_{1:K}, j}}{Z_{\beta}},
\qquad K < j \leq J.
\] 
\end{lemma}
\begin{proof}
By definition of $Z_{\beta}$, 
\[ 
Z_{\beta} = \sum_{s \in \bigcup \C{C}} 
\exp \biggl( - \sum_{i=1}^J \beta_i U_i(s) \biggr) 
= \sum_{j=K+1}^J \sum_{s \in C_j}  
\exp \biggl( - \sum_{i=1}^J \beta_i U_i(s) \biggr). 
\] 
Since $U_j(s) = - \B{1}(s \in C_j)$, $K < j \leq J$, 
\[
Z_{\beta} = \sum_{j=K+1}^J \sum_{s \in C_j} 
\exp \biggl( \beta_j - \sum_{i=1}^K \beta_i U_i(s) 
\biggr) = \sum_{j=K+1}^J \exp(\beta_j) Z_{\beta_{1:K}, j}. 
\]
Moreover, the definition 
of $\Up{P}_{\beta}$ implies that 
\begin{multline*}
\Up{P}_{\beta}(C_j) = \sum_{s \in C_j} Z_{\beta}^{-1} \exp \biggl( 
- \sum_{i=1}^J \beta_i U_i(s) \biggr) \\ 
= \sum_{s \in C_j} Z_{\beta}^{-1} \exp \biggl( 
\beta_j - \sum_{i=1}^K \beta_i U_i(s) \biggr)  
= \exp(\beta_j) \frac{Z_{\beta_{1:K},j}}{Z_{\beta}}, \qquad K < j \leq J.
\end{multline*}
\end{proof}
\begin{lemma}
For any $\beta, \beta' \in \F{B}$ 
\[
\Up{P}_{\beta'} = \Up{P}_{\beta} \iff 
\begin{cases} 
\beta_i' = \beta_i, & 1 \leq i \leq K, \\ 
\beta_i' = \beta_i + \log \bigl( Z_{\beta'} / Z_{\beta} \bigr), & K < i \leq J.
\end{cases}
\]
\end{lemma}
\begin{proof}
This is a consequence of the two previous lemmas and the fact that 
$\Up{P} \in \F{M}_{\C{C}}(\C{D}, \C{P})$ is determined by the substitute 
exponents of active Markov substitute pairs and the 
probabilities 
\[
\Up{P}(C_j) = \exp(\beta_j) \frac{Z_{\beta_{1:K, j}}}{Z_{\beta}}, \qquad K < j \leq J,
\]
that are themselves 
determined by $\Up{P}$. The constant $\log \bigl( Z_{\beta'} / Z_{\beta} 
\bigr)$ is there just 
because we chose not to break the symmetry of the role played by 
the components $C_j$, $K < j \leq J$. Since $\sum_{j=K+1}^J 
\Up{P}(C_j) = 1$, we could have characterized the vector 
of probabilities $(\Up{P}(C_j))_{K < j \leq J}$ by only 
$J - K - 1$ paramters instead of $J-K$. In other words we could 
have used the fact that $U_{J}(s) = - 1 - \sum_{j=K+1}^{J-1} U_{j}(s)$, 
to remove the parameter $\beta_J$ from the representation. 
\end{proof}
\begin{lemma}
The reverse inclusion
\[
\F{M}_{\C{C}}(\C{D}, \C{B}) \subset \bigl\{ \Up{P}_{\beta}, \beta \in \F{B} \bigr\}
\]
is also satisfied.
\end{lemma}
\begin{proof}
Consider any probability measure $\Up{P} 
\in \F{M}_{\C{C}}(\C{D}, \C{B}) = \F{M}_{\C{C}}(\C{D}, \C{P})$ 
(we know from \vref{prop:2.2} that $\F{M}_{\C{C}}(\C{D}, \C{P} ) \neq \varnothing$). 
Let 
\[ 
\beta_i' = \beta (y_{i,0}, y_{i,1}), \qquad 1 \leq i 
\leq I,
\] 
where $\beta (y_{i,0}, y_{i,1})$ 
is the substitute exponents of $\{y_{i,0}, y_{i,1}\}$ under 
$\Up{P}$ on $\C{D}$. 
From the Markov substitute property expressed by \vref{eq:swapsub}, 
we see that for any $s, s' \in C \in \C{C}$, any $\pi \in \F{P}_{s,s'}$, 
\[
\Up{P}(s') = \Up{P}(s) \exp \bigl( w_{\beta'}(\pi) \bigr) . 
\]
Therefore, for any $s \in C \in \C{C}$, 
\[ 
\Up{P}(s) = \Up{P}(s_C) \exp \bigl( w_{\beta'}(\pi_s) \bigr) = 
\exp \biggl( - \sum_{i=1}^I \beta_i' U_i(\pi_s) \biggr) \Up{P}(s_C)
\]  
and for any $\pi \in \F{L}$, 
\[ 
1 = \exp \bigl( w_{\beta'}(\pi) \bigr) = 
\exp \biggl( - \sum_{i=1}^I \beta_i' U_i (\pi) \biggr).  
\] 
Using \vref{eq:Udep}, we obtain that 
\[ 
\sum_{k=K+1}^I \biggl( \beta_k' - \sum_{i=1}^K \beta_i' e_{i,k} \biggr)  
U_k(\pi) = 0, \qquad \pi \in \F{L}, 
\] 
so that 
\[
\beta_k' = \sum_{i=1}^{K} \beta_i' \, e_{i,k}, \qquad K < k \leq I,
\]
since the functions $(U_k, K < k \leq I)$ defined by 
\vref{eq:Ufunc} are linearly independent. 
Therefore 
\[
\sum_{i=1}^I \beta_i' \, U_i(\pi_s) = \sum_{i=1}^K \beta_i' \, U_i(s), 
\qquad s \in \bigcup \C{C}. 
\]
On the other hand, putting
\[ 
\beta_j = \log \bigl( \Up{P}(s_{C_j}) \bigr), \qquad K < j \leq J, 
\]
we obtain that
\[
\Up{P}(s_C) = \exp \biggl( - \sum_{j=K+1}^J \beta_j \, U_j(s) \biggr), \qquad
\text{ for any } C \in \C{C}. 
\]
Let us define 
\[ 
\beta_i = \beta_i', \qquad 1 \leq i \leq K.
\] 
For any $s \in C \in \C{C}$, we obtain that 
\begin{multline*}
\Up{P}(s) = 
\exp \biggl( - \sum_{i=1}^I \beta_i' \, U_i(\pi_s) \biggr) \Up{P}(s_C) 
\\ = \exp \biggl( - \sum_{i=1}^K \beta_i' \, U_i(s) - \sum_{j=K+1}^J \beta_j 
\, U_j(s) \biggr)  
=  \exp \biggl( - \sum_{j=1}^J \beta_j \, U_j(s) \biggr).
\end{multline*}
Remark that with this choice of parameter $\beta$,
\[
Z_{\beta} = \sum_{s \in \bigcup \C{C} } \exp \biggl( - \sum_{j=1}^J \beta_j U_j(s) \biggr)  =  \sum_{s \in \bigcup \C{C}} \Up{P}(s) = 1, 
\]
so that $\Up{P} = \Up{P}_{\beta}$. 
\end{proof}
We obtain \vref{prop:expMod} by gathering the previous lemmas together
and indexing directly $\C{P}$ and $\C{C}$ by themselves instead of 
using numerical indices as in the proofs.  
\end{proof}

Remark that \cref{prop:expMod} implies that the maximum likelihood estimator 
in $\F{M}_{\C{C}}(\C{D}, \C{B})$ is an asymptotically efficient estimator of the 
parameters $\beta_i$, $i \in \C{I}$. This property does not provide 
a practical estimator though, since the construction of the energy 
functions $U_i$, $i \in \C{I}$ is not explicit, at least in the general 
case. It is nevertheless possible to approximate the maximum 
likehood estimator without computing the energy functions 
explicitly, using a Monte-Carlo simulation algorithm that 
is proposed in \cite[page 135]{Mainguy} and will 
be further presented and analysed in another publication. 

\section{Examples} 

\subsection{Some simple recursive structures} 

Let us give an example showing that a minimal set of pairs is 
not necessarily free, so that we can have $\C{P} \neq \C{F}$, 
and that the set of free pairs $\C{F}$---and therefore the form 
of the energy---may depend on the support   
$\ds \bigcup_{C \in \C{C}} C$ of the Markov substitute process 
in a non trivial way.

On the three words dictionary $D = \{a, b, c\}$, consider the domain 
$\C{D} = D^+$ and the 
family of subsets $\C{B} = \{ B_1, B_2 \}$, 
where $B_1 = \{ab, a \}$ and $B_2 = \{bc, c\}$. 
Define
\begin{align*}
C_1 & = \{ a b^n c, \; n \in \B{N} \}, \\  
C_2 & = \{ b^m c a b^n,  \; m, n \in \B{N} \},  \\ 
C_3 & = \{ b^k c a b^m c a b^n,  \; k, m, n \in \B{N} \}. 
\end{align*}
It is easy to check that $C_k \in \quotient{D^+}{\sim_{\C{B}}}$, 
$1 \leq k \leq 3$, so that we may consider the three 
supports $C_k$, corresponding to $\C{C}_k = \{ C_k \} 
\subset \quotient{D^+}{\sim_{\C{B}}}$. 

In $C_1$, we have the loop 
\[ 
ac = \alpha\bigl( (\varepsilon, c), a \bigr) \rightarrow 
\alpha \bigl( (\varepsilon, c), ab \bigr) = abc = 
\alpha \bigl( (a, \varepsilon), bc \bigr) \rightarrow 
\alpha\bigl( (a, \varepsilon), c \bigr) = ac,  
\] 
inducing the constraint
\[
\beta(a,ab) = \beta(c, bc), 
\]
so that 
\[ 
\F{M}_{\C{C}_1}(D^+, \C{B}) = \bigl\{ \Up{P} \in \C{M}_+^1(C_1) \, ; \, 
\Up{P}(ab^nc) = r(1 - r)^n, r \in ]0,1[ \bigr\}. 
\] 
In this case we can choose the set of free pairs either equal to 
$\C{F}(\C{C}_1)  = \{ B_1 \}$ 
or equal to $\C{F}(\C{C}_1)  = \{ B_2 \}$. 

In $C_2$, there is no non trivial loop, so that the set of free 
pairs is $\C{F}(\C{C}_2) = \C{B}$, 
and we get an exponential family with two parameters 
\begin{multline*}
\F{M}_{\C{C}_2} ( D^+, \C{B}) = \bigl\{ \Up{P} \in \C{M}_+^1(C_2) \, ; 
\\ \, \Up{P}(b^m c a b^n) = r_1 r_2(1 - r_1)^m (1 - r_2)^n, r_1, r_2 
\in ]0,1[ \bigr\}.
\end{multline*}

In $C_3$, we have the same non trivial loop as in $C_1$, 
imposing the constraint $\beta(a, ab) = \beta(c, bc)$, 
so that
we can choose $\C{F}(\C{C}_3) = \{ B_1 \}$ or 
$\C{F}(\C{C}_3) = \{ B_2 \}$, and  
\begin{multline*}
\F{M}_{\C{C}_3}(D^+, \C{B}) = \bigl\{ \Up{P} \in \C{M}_+^1(C_3) \, ; \, 
\Up{P}(b^kcab^mcab^n) = r(1-r)^{k+m+n}, \\ k, m, n \in \B{N}, r \in ]0, 1[ 
\bigr\}.
\end{multline*}
Note also that $\F{M}_{\C{C}_1}(D^+, \C{B}) = \F{M}_{\C{C}_1} ( D^+, \{B_1\})
= \F{M}_{\C{C}_1} (D^+, \{B_2\})$, 
so that $\C{B}$ is not a minimal set of pairs on $\C{C}_1$, but 
is a minimal set of pairs both on $\C{C}_2$ and $\C{C}_3$.
Therefore, in $\F{M}_{\C{C}_3} \bigl( D^+, \C{B} \bigr)$, 
$\C{B}$ is a minimal set of pairs but is not a free set of pairs.  

\subsection{Links with Markov chains} 

Let us now consider another example, to show that Markov substitute 
processes contain Markov chains. Consider a finite state space $D$ 
and the family of subsitute sets 
\begin{align*}
\C{B} & = \bigl\{ \alpha \bigl( (a, b), D \bigr), (a, b) \in D^2 \bigr\}. 
\end{align*} 
\begin{lemma}
The model $\F{M}(D^L, \C{B})$ contains the law of 
all time homogeneous Markov chains $(S_1, \dots, S_L)$ 
with positive transition matrix $M$, that is the law of all Markov 
chains $(S_1, \dots, S_L)$ such that 
\[ 
\B{P}(S_{k} = b \, | \, S_{k-1} = a ) 
= M(a,b) > 0, \qquad (a, b) \in D^2, 1 < k \leq L.
\] 
\end{lemma}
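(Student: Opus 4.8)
The plan is to fix an arbitrary time homogeneous Markov chain $(S_1, \dots, S_L)$ with initial distribution $\pi \in \C{M}_+^1(D)$ and positive transition matrix $M$, whose law is
\[
\B{P}_S(s_1 \cdots s_L) = \pi(s_1) \prod_{k=2}^L M(s_{k-1}, s_k), \qquad s_1 \cdots s_L \in D^L,
\]
with $\B{P}_S(s) = 0$ for every $s \in D^+$ whose length differs from $L$, and then to check directly that each member $B = \alpha\bigl( (a,b), D \bigr) = \{ \gamma(a,w,b), w \in D \}$ of $\C{B}$ is a Markov substitute set of $\B{P}_S$ on the domain $D^L$. The natural candidate for the substitute exponent is
\[
\beta\bigl( \gamma(a,w,b), \gamma(a,w',b) \bigr) = \log \biggl( \frac{M(a,w')\,M(w',b)}{M(a,w)\,M(w,b)} \biggr), \qquad w, w' \in D,
\]
which is well defined because $M > 0$, and is manifestly skew-symmetric.

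To verify \cref{eq:swapsub} I would take a context $x = (x_1, x_2) \in (D^*)^2$ and letters $w, w' \in D$ with $\alpha\bigl( x, \{\gamma(a,w,b), \gamma(a,w',b)\} \bigr) \subset D^L$, which forces $\ell(x_1) + \ell(x_2) = L - 3$. Expanding $\B{P}_S\bigl( \alpha(x, \gamma(a,w,b)) \bigr) = \B{P}_S\bigl( \gamma(x_1, a, w, b, x_2) \bigr)$ by the product formula above, the key observation is that the only two factors affected when the central block $\gamma(a,w,b)$ is replaced by $\gamma(a,w',b)$ are $M(a,w)$ and $M(w,b)$, which become $M(a,w')$ and $M(w',b)$; every other factor is unchanged because the context $x$ is held fixed (in particular the initial factor, which is $\pi(a)$ when $x_1 = \varepsilon$ and involves the first letter of $x_1$ otherwise). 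Hence, whenever $\B{P}_S\bigl( \alpha(x, \gamma(a,w,b)) \bigr) > 0$, dividing the two products yields precisely $\exp\bigl( \beta(\gamma(a,w,b), \gamma(a,w',b)) \bigr)$, so \cref{eq:swapsub} holds.

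It then remains only to dispose of the case $\B{P}_S\bigl( \alpha(x, \gamma(a,w,b)) \bigr) = 0$. Since $M > 0$ and the string $\gamma(x_1, a, w, b, x_2)$ has the required length $L$, its probability can vanish only through the initial factor, that is, the $\pi$-mass of its first letter; but $\alpha(x, \gamma(a,w,b))$ and $\alpha(x, \gamma(a,w',b))$ have the same first letter, so $\B{P}_S\bigl( \alpha(x, \gamma(a,w',b)) \bigr) = 0$ as well, and \cref{eq:swapsub} holds with both sides equal to $0$ whatever the value of $\beta$. Since $B$ was an arbitrary element of $\C{B}$, all members of $\C{B}$ are Markov substitute sets of $\B{P}_S$, i.e.\ $\B{P}_S \in \F{M}(D^L, \C{B})$, which is the claim. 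There is no genuine obstacle here: the proof reduces to the one-line remark that in a Markov chain the weight carried by a letter at an interior position factors through exactly its two adjacent transitions and through nothing else; the only points requiring a little care are the empty-context cases and the check that the single skew-symmetric function $\beta$ displayed above serves simultaneously for every context $x$, both of which are immediate from that remark.
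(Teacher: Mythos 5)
Your proposal is correct and follows essentially the same route as the paper: both verify \cref{eq:swapsub} directly from the product form of the Markov chain law, observing that replacing the interior letter $w$ by $w'$ only changes the two adjacent transition factors, which yields the substitute exponent $\beta\bigl(\gamma(a,w,b),\gamma(a,w',b)\bigr) = \log\bigl(M(a,w')M(w',b)/(M(a,w)M(w,b))\bigr)$. Your explicit treatment of the zero-probability case via the initial distribution is the same point the paper dispatches with its ``positive iff positive'' remark, just spelled out in more detail.
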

\begin{proof}
If $S$ is such a Markov chain, 
for any context $x \in (D^*)^2$
and any 
\[ 
(a,y,b), (a,y',b) \in \alpha \bigl( (a,b), D \bigr) \in \C{B}, 
\] 
it is easy to check that 
\[ 
\B{P} \bigl[ S = \alpha \bigl( x, (a,y',b) \bigr) \bigr] > 0
\text{ if and only if }
\B{P} \bigl[ S = \alpha \bigl( x, (a,y,b) \bigr) \bigr] > 0. 
\] 
In this case, we can write $x$ as $x = (w_{1:k}, w_{k+4:L})$, where $0 \leq k \leq L-3$, 
$w_{1:k} = (w_1, \dots, w_k) \in D^k$, and $w_{k+4:L} 
= (w_{k+4}, \dots, w_L) \in D^{L-k-3}$, with the convention that  
$w_{1:0} = w_{L+1:L} = \varepsilon$. We can then compute 
\begin{multline*}
\B{P} \bigl[ S = \alpha \bigl( x, (a,z,b) \bigr) \bigr] \\ 
=  \B{P} \bigl[ S_{1:k+1} = \gamma(w_{1:k},a)  \bigr] 
M(a,z) M(z,b) \\ \times \B{P} \bigl[ S_{k+4:L} = w_{k+4:L} \, | \, 
S_{k+3} = b\bigr]. 
\end{multline*}
Therefore
\[
\B{P} \bigl[ S = \alpha \bigl( x, (a,y',b) \bigr) \bigr]  
 = \B{P} \bigl[ S = \alpha \bigl( x, (a,y,b) \bigr) \bigr]  
\exp \bigl[ \beta \bigl( (a,y,b), (a,y',b) \bigr) \bigr],  
\]
where 
\[ 
\beta \bigl( (a,y,b), (a,y',b) \bigr)  =
\log \Biggl( \frac{ M(a,y') M(y',b)}{M(a,y) M(y,b)} \Biggr),  
\] 
proving that $\B{P}_{S} \in 
\F{M} \bigl( D^L, \C{B} \bigr)$. 
\end{proof}
\begin{lemma}
\[ 
\quotient{D^L}{\sim_{\C{B}}} = \bigl\{ \alpha \bigl( (a, b), D^{L-2} \bigr), 
(a,b) \in D^2 \bigr\}. 
\] 
\end{lemma}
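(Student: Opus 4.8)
The plan is to make the relation $\sim_{\C{B}}$ explicit. Recall that $\quotient{D^L}{\sim_{\C{B}}}$ is the set of connected components of the graph $\C{G}(D^L,\C{B})$ of \cref{eq:3.1}. With the present choice of $\C{B}$, a pair $(s,s')$ is an edge of this graph exactly when there are $x=(x_1,x_2)\in(D^*)^2$, letters $a,b\in D$ and $z,z'\in D$ with $s=\gamma(x_1,a z b,x_2)$ and $s'=\gamma(x_1,a z' b,x_2)$, both of length $L$. Since $azb$ has length $3$, such an edge forces $\ell(x_1)+3+\ell(x_2)=L$, hence $0\leq \ell(x_1)\leq L-3$; so $s$ and $s'$ coincide at every position except the single interior position $\ell(x_1)+2\in\seg{2,L-1}$, where the value passes from $z$ to $z'$, while the neighbouring positions keep the values $a$ and $b$. (When $L\leq 2$ no such edge exists and the partition is into singletons, which still coincides with $\bigl\{\alpha((a,b),D^{L-2}),(a,b)\in D^2\bigr\}$ under the convention $D^0=\{\varepsilon\}$; below I take $L\geq 3$.)

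First I would prove the easy inclusion. By the description above, every edge of $\C{G}(D^L,\C{B})$ leaves the first and the last letter of a string unchanged, hence so does any path in the graph. Therefore each connected component is contained in a set of the form $\{s\in D^L : s_1=a,\ s_L=b\}=\alpha\bigl((a,b),D^{L-2}\bigr)$ for a unique $(a,b)\in D^2$; in other words $\quotient{D^L}{\sim_{\C{B}}}$ refines $\bigl\{\alpha((a,b),D^{L-2}),(a,b)\in D^2\bigr\}$.

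For the reverse inclusion I would exhibit an explicit path joining any two strings with the same first and last letters. Given $s=a\,s_2\cdots s_{L-1}\,b$ and $t=a\,t_2\cdots t_{L-1}\,b$ in $\alpha((a,b),D^{L-2})$, I rewrite the interior positions one at a time from left to right: for $j=2,\dots,L-1$, the replacement of the current word $a\,t_2\cdots t_{j-1}\,s_j\,s_{j+1}\cdots s_{L-1}\,b$ by $a\,t_2\cdots t_{j-1}\,t_j\,s_{j+1}\cdots s_{L-1}\,b$ is an edge of $\C{G}(D^L,\C{B})$, coming from the substitute set $\alpha\bigl((p,q),D\bigr)\in\C{B}$ with $p$ the left neighbour of position $j$ (equal to $t_{j-1}$, or to $a$ when $j=2$) and $q$ its right neighbour (equal to $s_{j+1}$, or to $b$ when $j=L-1$), and with context the two flanking blocks; every intermediate word has length $L$, hence lies in $\C{D}=D^L$. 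Concatenating these $L-2$ steps connects $s$ to $t$, so $\alpha((a,b),D^{L-2})$ is contained in a single connected component.

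Combining the two inclusions, each $\alpha((a,b),D^{L-2})$ is precisely one connected component of $\C{G}(D^L,\C{B})$, and these sets are nonempty and pairwise disjoint for distinct $(a,b)\in D^2$, which yields the announced description of $\quotient{D^L}{\sim_{\C{B}}}$. There is no real obstacle here beyond the bookkeeping for the boundary cases $j\in\{2,L-1\}$ (where one neighbour is the frozen first or last letter) and the degenerate lengths $L\leq 2$.
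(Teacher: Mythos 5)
Your proof is correct and follows essentially the same route as the paper: show that every edge of $\C{G}(D^L,\C{B})$ preserves the first and last letters, and that each set $\alpha\bigl((a,b),D^{L-2}\bigr)$ is connected (the paper leaves the connectivity as ``easy to see'', while you make it explicit with the left-to-right one-letter-at-a-time path). No gap; your extra care with the boundary positions and small $L$ only adds detail to the paper's sketch.
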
 
\begin{proof}
It is easy to see that $\alpha \bigl( (a, b), D^{L-2} \bigr)$ 
is connected by the graph $\C{G}(D^L, \C{B})$ (this notation 
was defined in \vref{eq:3.1}). On the other hand, 
if $(w_{1:L}, w'_{1:L}) \in \C{G}(D^L, \C{B})$, then $w_1 = w'_1$
and $w_{L} = w'_{L}$, showing that there is no connection 
between $\alpha \bigl( (a, b), D^{L-2} \bigr)$ and its 
complement in $D^{L}$. 
\end{proof}
\begin{lemma}
For any random process $S \in D^L$ such that $\B{P}_S \in \F{M}(D^L, \C{B})$, there is a time-homogeneous Markov 
chain $(X_1, \dots, X_L)$ with positive transition matrix $M$, 
such that for any $(a, b) \in D^2$, such that $
\B{P} ( S_1 = a, S_L = b ) > 0$, 
\[
\B{P}_{\ds S | S_1 = a, S_L = b} = \B{P}_{\ds X | X_1 = a, X_L = b}, 
\]
whereas the marginal law of the pair of end points 
$\B{P}_{\ds (S_1, S_L)}$ may be any arbitrary probability measure. 
On the other hand, the set of possible 
values of the probability measure 
$\B{P}_{\ds X_1, X_L}$ is constrained by the relation 
\[
\B{P} \bigl( X_L = b \, | \, X_1 = a \bigr) = M^{L-1}(a,b). 
\] 
\end{lemma}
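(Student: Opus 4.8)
Throughout I assume $L\geq 4$ (for $L\leq 3$ the statement is degenerate). Write $C_{a,b}\stackrel{\mathrm{def}}{=}\alpha\bigl((a,b),D^{L-2}\bigr)$, so that by the Lemma just proved $\quotient{D^L}{\sim_{\C{B}}}=\{C_{a,b}:(a,b)\in D^2\}$. By \cref{prop:2.2}, $\supp(\B{P}_S)$ is a union of classes $C_{a,b}$ and $\B{P}_S$ is strictly positive on each of them. The plan is to show that on every charged class the conditional law $\B{P}_{S\mid S_1=a,\,S_L=b}$ factorizes through a single nearest‑neighbour pair potential, and then to renormalize that potential into a transition matrix by Perron–Frobenius. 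To set up, note that if $s,s'\in C_{a,b}$ agree off an internal coordinate $j$, say $s_j=w$, $s'_j=w'$, then $\alpha\bigl((s_{j-1},s_{j+1}),D\bigr)\in\C{B}$ and \cref{eq:swapsub} gives $\B{P}_S(s')/\B{P}_S(s)=\exp\beta\bigl((s_{j-1},w,s_{j+1}),(s_{j-1},w',s_{j+1})\bigr)$; I write $\Phi(\alpha;w\to w';v)$ for this exponent, and the essential point is that it depends only on the three‑letter window, \emph{not} on the coordinate $j$.

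The crux is to prove that there is a function $\mu:D^2\to\B{R}$ with $\Phi(\alpha;w\to w';v)=\mu(\alpha,w')+\mu(w',v)-\mu(\alpha,w)-\mu(w,v)$ for every window occurring in $\supp(\B{P}_S)$. Inside a class $C_{a,b}$ (this is where $L\geq 4$ is used) one can rewrite coordinates $j$ and $j+1$ in either order and the two orders must reach the same state with the same total exponent, giving the functional identity $\Phi(\alpha;w\to w';v)+\Phi(w';v\to v';\gamma)=\Phi(w;v\to v';\gamma)+\Phi(\alpha;w\to w';v')$. Rearranged, it forces $\Phi(\alpha;w\to w';v)-\Phi(\alpha;w\to w';v')$ to be independent of $\alpha$; combined with the automatic additivity $\Phi(\alpha;w\to w'';v)+\Phi(\alpha;w''\to w';v)=\Phi(\alpha;w\to w';v)$ (both sides are a difference of logarithms of $\B{P}_S$‑values on the same class), this yields a splitting $\Phi(\alpha;w\to w';v)=\bigl[A(\alpha,w')+B(w',v)\bigr]-\bigl[A(\alpha,w)+B(w,v)\bigr]$, and feeding this back into the order‑reversal identity shows that $A$ and $B$ have the same second mixed difference, so $A-B$ is additive and can be absorbed, leaving a single $\mu$. (Equivalently: the substitute relations say precisely that $\B{P}_S(\,\cdot\mid C_{a,b})$ is a strictly positive Markov field on the path $1\!-\!2\!-\!\cdots\!-\!L$, so Hammersley–Clifford provides a nearest‑neighbour factorization, and position‑independence of $\Phi$ makes the edge potential the same on every edge.)

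Granting this, the rest is bookkeeping. First one runs the path argument of \cref{prop:expMod}: starting from the reference word $(a,w_0,\dots,w_0,b)$ one rewrites coordinates $2,3,\dots,L-1$ successively and telescopes using the formula for $\Phi$; the intermediate $\mu(\cdot,w_0)$ terms cancel and the accumulated exponent collapses to $\sum_{i=1}^{L-1}\mu(s_i,s_{i+1})$ plus a constant depending only on $(a,b)$, so $\B{P}_S(s\mid S_1=a,S_L=b)=E^{L-1}(a,b)^{-1}\prod_{i=1}^{L-1}E(s_i,s_{i+1})$ with $E(x,y)\stackrel{\mathrm{def}}{=}\exp\mu(x,y)$ strictly positive. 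Then let $\rho>0$ and $h>0$ be the Perron eigenvalue and right eigenvector of $E$ (so $Eh=\rho h$) and set $M(x,y)\stackrel{\mathrm{def}}{=}E(x,y)\,h(y)/\bigl(\rho\,h(x)\bigr)$; this $M$ is strictly positive and stochastic, and the time‑homogeneous chain $X$ with matrix $M$ and any strictly positive initial law $\pi$ is a Markov chain (which, by the first Lemma of this subsection, indeed lies in $\F{M}(D^L,\C{B})$). For $s\in C_{a,b}$ the product $\prod_{i=1}^{L-1}M(s_i,s_{i+1})$ telescopes to $h(b)\,\rho^{-(L-1)}h(a)^{-1}\prod_{i=1}^{L-1}E(s_i,s_{i+1})$, so the factor $h(b)\,\rho^{-(L-1)}h(a)^{-1}$ appears in both the numerator and, after summing over $s$, the denominator of $\B{P}_X(X=s\mid X_1=a,X_L=b)=\prod M(s_i,s_{i+1})/M^{L-1}(a,b)$ and cancels, leaving $E^{L-1}(a,b)^{-1}\prod E(s_i,s_{i+1})=\B{P}_S(s\mid S_1=a,S_L=b)$ — the claimed bridge identity for every $(a,b)$ with $\B{P}_S(S_1=a,S_L=b)>0$. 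Finally, for any $\pi$ one has $\B{P}_X(X_L=b\mid X_1=a)=\sum_{s_1=a,\,s_L=b}\prod_i M(s_i,s_{i+1})=M^{L-1}(a,b)$ and $\B{P}_X(X_1=a,X_L=b)=\pi(a)M^{L-1}(a,b)$, which is the stated constraint on $\B{P}_{(X_1,X_L)}$; the law $\B{P}_{(S_1,S_L)}$ entered the argument only through which classes $C_{a,b}$ are charged, hence is otherwise unconstrained.

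The step I expect to be the real obstacle is the crux: extracting the nearest‑neighbour pair‑potential form of the substitute exponent and, above all, obtaining \emph{time‑homogeneity} of that potential (which rests on $\Phi$ being a function of the window and not of its position). Everything afterwards — the Perron–Frobenius renormalization and the telescoping cancellations — is routine. One should also be mildly careful with the partial‑support case, since the functional identity above is only available for the windows and the $4$‑cycles actually realised inside the charged classes; checking that these still force the decomposition (and a single $M$ consistent across all charged classes) is precisely what makes the argument genuinely require $L\geq 4$.
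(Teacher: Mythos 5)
Your proposal is correct and follows essentially the same route as the paper: derive from loop identities among the substitute exponents that they come from a single, position-independent nearest-neighbour pair potential, turn that potential into a positive stochastic matrix $M$ by Perron--Frobenius conjugation, and match the conditional (bridge) laws on each charged class. The packaging differs slightly: the paper obtains the pair-potential form in closed form from one explicit five-step loop, namely \cref{eq:3.7}, which expresses every $\beta(ayb,ay'b)$ through the reference exponents $\beta(ac^2,abc)$, and then concludes bridge equality by invoking that substitute exponents determine the conditionals on components, whereas you use the adjacent-swap commutation identity and verify the bridge equality by direct telescoping; both are adaptations of the one-dimensional random field argument the paper cites from Georgii. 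One caution at your crux: ``$A-B$ is additive and can be absorbed'' is too quick as stated, because a splitting $A-B=\phi(w)+\psi(v)$ with non-constant $\phi$ would leave a genuine single-site term at position $L-1$ and would not yield a homogeneous edge potential. It is closed by the normalizations implicit in your construction (with reference letter $c$, $A(\cdot,c)=B(\cdot,c)=B(c,\cdot)=0$): setting $w=v=c$ in the equal-mixed-difference identity gives $B(w',v')=A(w',v')-A(c,v')$, so $A-B$ depends only on its second argument, which cancels inside $\Phi$, and you recover exactly \cref{eq:3.7} with $\mu=\log A$. With that one line supplied, the remaining steps (Perron--Frobenius renormalization, telescoping cancellation, and the constraint $\B{P}(X_L=b\,|\,X_1=a)=M^{L-1}(a,b)$) are sound; like the paper, you treat the small-$L$ and partial-support boundary cases informally, but you at least flag them, while the paper's loop argument implicitly requires the loop states (suitably embedded) to carry positive probability.
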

\begin{proof}
In fact one can see from the definition of Markov substitute sets, that
for this specific choice of $\C{B}$, the model $\F{M}(D^L,\C{B})$ 
is the set of one dimensional random fields with prescribed boundary 
conditions. Building on this remark, a slight modification of the 
proof that a stationary one dimensional random field with a finite 
state space is a stationary Markov chain gives the result stated 
in the lemma. For the sake of completeness, we give here a proof 
adapted from \cite[page 45]{Georgii}.   

Let $S$ be as described in the lemma, and $\beta$ its substitute exponent.
Let $c$ in $D$ be some word of the dictionary. From the existence of the 
loop 
\begin{multline*}
cac^3 = \alpha 
\bigl( ( c, c), acc \bigr) \rightarrow cayc^2 = \alpha \bigl( (ca,\varepsilon), 
ycc \bigr) \\ \rightarrow 
caybc = \alpha \bigl( (c,c), ayb \bigr) \rightarrow 
cay'bc = \alpha \bigl( (ca,\varepsilon), y'bc \bigr) \\ \rightarrow 
cay'cc = \alpha \bigl( (c,c), ay'c \bigr) \rightarrow 
cac^3, 
\end{multline*}
we deduce that 
\[
\beta( ac^2, ayc ) + \beta(yc^2, ybc) + \beta( ayb, ay'b ) 
 - \beta( y'c^2, y'bc) - \beta( ac^2, ay'c) = 0,
\]
so that, for any $(a,y,y',b) \in D^4$, 
\begin{equation}
\label{eq:3.7}
\beta ( ayb, ay'b ) =  
 \beta( ac^2, ay'c)
+ \beta( y'c^2, y'bc) 
- \beta( ac^2, ayc ) - \beta(yc^2, ybc). 
\end{equation}
This means that all the substitute exponents are determined by the 
subfamily of substitute exponents consisting in 
\[ 
\beta( ac^2, abc), \qquad (a,b) \in D^2. 
\] 
Consider the positive matrix 
\[ 
A(a,b) = \exp \bigl[ \beta(ac^2, abc) \bigr], \qquad (a,b) \in D^2. 
\] 
According to the Perron-Frobenius theorem, it has a unique positive 
eigenvector $\bigl( \psi(a) > 0, a \in D \bigr)$ 
of norm $ \lVert \psi \rVert = 1$ associated to a real 
positive eigenvalue $\lambda$ (which turns out to be its spectral 
radius). 
Introduce the matrix 
\[
M(a,b) = \lambda^{-1} \psi(a)^{-1} A(a,b) \psi(b). 
\] 
and remark that it is a positive Markov transition 
matrix, due to the fact that $\psi$ is a positive eigenvector
with eigenvalue $\lambda$. 
We see from \cref{eq:3.7} that 
\[ 
\beta( ayb, ay'b) = \log \biggl( \frac{M(a,y') M(y', b)}{ 
M(a,y) M(y,b)} \biggr), \qquad (a,b,y,y') \in D^4. 
\] 

Let $(X_1, \dots, X_L)$ be a Markov chain with transition 
matrix $M$ and initial distribution some (arbitrary) 
probability measure on $D$ with full support. The previous equation 
shows that $\B{P}_{X} \in \F{M}(D^L. \C{B})$, with the same 
substitute exponents as $\B{P}_S$. From our study of Markov substitute 
processes, we know that the substitute exponents define the 
conditional distribution of a Markov substitute process 
on each component of $\quotient{D^L}{\sim_{\C{B}}}$ 
of positive probability, these components being 
described in the previous lemma. We conclude that for any $(a,b) \in D^2$ 
such that $\B{P} \bigl[ S \in \alpha \bigl( (a,b), D^{L-2} 
\bigr) \bigr]  > 0$, or in other words such that 
$\B{P} ( S_1 = a, S_L = b ) > 0$, 
\[ 
\B{P}_{\ds S | S \in \alpha \bigl( (a,b), D^{L-2} \bigr)} 
= \B{P}_{\ds X | X \in \alpha \bigl( (a,b), D^{L-2} \bigr)}, 
\] 
that can also be written as 
\[ 
\B{P}_{\ds S | S_1 = a, S_L = b} 
= \B{P}_{\ds X | X_1 = a, X_L = b}. 
\] 
\end{proof}

This study shows that the substitute sets of 
Markov chains have a very specific structure, and therefore 
that Markov substitute processes form a much richer 
family of models than Markov chains, while they can still be parametrized 
as exponential families, ensuring that they have some valuable  
properties as a statistical model. 

\section{Multi-dimensional extensions} 

  In this paper we have defined one-dimensional Markov substitute processes
and shown that they are an extension of one-dimensional Markov random 
fields. We can also generalize the notion of a multi-dimensional Markov random 
field by proposing a definition for multi-dimensional Markov 
substitute processes. 

  Let us define first a notion of Markov substitute process indexed by 
an arbitrary finite set $I$.

\begin{dfn}
Given a random process $S \in D^I$ indexed by a 
finite set $I$, we 
will say that $(J, B)$, where $J \subset I$ and $B \subset D^{J}$ is a Markov substitute set when there exists for any $y, y' \in B$, 
any $x \in D^{I \setminus J}$ a skew-symmetric substitute exponent 
$\beta_J (y,y') \in \B{R}$ 
such that 
\[ 
\B{P} \bigl( S_{J} = y' , S_{I \setminus J} = x \bigr) 
= 
\B{P} \bigl( S_{J} = y , S_{I \setminus J} = x \bigr) 
\exp \bigl[ \beta_J ( y,y') \bigr].  
\]  
\end{dfn}  

When $I \subset \B{Z}^d$ is part of a $d$-dimensional lattice,
one can make the definition translation invariant by imposing that 
for any $J \subset I$, $(J,B)$ is a translation invariant 
Markov substitute set 
if and only if, for any $t \in \B{Z}^d$ such that $t + J \subset I$, 
$(t+J, B \circ \tau_t)$ is a Markov substitute set in the sense 
of the above definition, where $\tau_t(i) = i - t$, and 
the substitute exponents are the same in the sense that
\[ 
\beta_{t + J}(y \circ \tau_t,y' \circ \tau_t ) = \beta_J(y,y'). 
\] 

It is easy to see that an obvious reformulation of 
\vref{prop:expMod} remains true for these two variants of 
the definition of Markov substitute sets. 
(We could also formulate analogous definitions for a 
process defined on a restricted domain $\C{D} \subset D^I$.) 

Remark however that these multi-dimensional variants are not properly speaking 
extensions of the one-dimensional setting, since in the one dimensional 
case, we can let the Markov substitute sets contain 
expressions of varying lengths, leading to the modeling 
of recursive structures. This ability of modeling recursive 
structures gives a special interest to 
one-dimensional Markov substitute processes. 
 
\section{Conclusion}

We presented here a slightly more general definition of Markov 
substitute processes than in \cite{Mainguy}, where it is assumed 
most of the time that $\C{D} = D^+$. We showed the main
property of the model, namely that it is for each legitimate choice 
of support an exponential family. One can show a host of interesting 
additional properties, some depending on further 
assumptions on the domain $\C{D}$. The model, noticeably,    
can be viewed as an extension of Markov chains and 
has deep connections with context free grammars 
\cite{Chomsky56,Chi98,Chi99}. One can also propose
algorithms to select Markov substitute models, estimate their 
parameters, simulate from them or compute the probability of 
a given sentence $s \in D^+$. We refer to \cite{Mainguy} and 
to forthcoming publications for more information and insight
on Markov substitute processes, a model at the crossroad of statistics and 
formal grammars.

\bibliographystyle{apalike}
\bibliography{biblio}

\end{document}